\begin{document}

\title{\Large Model Assessment and Selection under Temporal Distribution Shift}\blfootnote{Author names are sorted alphabetically.}

\author{Elise Han\thanks{Department of Computer Science, Columbia University. Email: \texttt{lh3117@columbia.edu}.}
	\and Chengpiao Huang\thanks{Department of IEOR, Columbia University. Email: \texttt{chengpiao.huang@columbia.edu}.}
	\and Kaizheng Wang\thanks{Department of IEOR and Data Science Institute, Columbia University. Email: \texttt{kaizheng.wang@columbia.edu}.}
}

\date{This version: June 2024}

\maketitle

\begin{abstract}
We investigate model assessment and selection in a changing environment, by synthesizing datasets from both the current time period and historical epochs. To tackle unknown and potentially arbitrary temporal distribution shift, we develop an adaptive rolling window approach to estimate the generalization error of a given model. This strategy also facilitates the comparison between any two candidate models by estimating the difference of their generalization errors. We further integrate pairwise comparisons into a single-elimination tournament, achieving near-optimal model selection from a collection of candidates. Theoretical analyses and numerical experiments demonstrate the adaptivity of our proposed methods to the non-stationarity in data.
\end{abstract}
\noindent{\bf Keywords:} Model assessment, model selection, temporal distribution shift, adaptivity.

\section{Introduction}\label{sec-intro}

Traditionally, statistical learning theory assumes that models are trained and tested under the same data distribution. However, when practitioners train a model and deploy it into real environment, the environment often changes over time. Such temporal distribution shift may lead to serious decline in the model's quality. It is important to assess the model's performance in real time and detect possible degradation.

Moreover, one often needs to choose among multiple candidate models that result from different
learning algorithms (e.g., linear regression, random forests, neural networks) and hyperparameters
(e.g., penalty parameter, step size, time window for training). Temporal distribution shift poses a major challenge to model selection, as past performance may not reliably predict future outcomes. Learners usually have to work with limited data from the current time period and abundant historical data, whose distributions may vary significantly.

\paragraph{Main contributions.} In this paper, we develop principled approaches to model assessment and model selection under temporal distribution shift.
\begin{itemize}
\item (Model assessment) We propose a rolling window strategy that adaptively selects historical data to estimate a model's generalization error.
\item (Model selection) We then use the method above to compare any pair of models by applying it to the difference between their generalization errors. Based on this, we develop a single-elimination tournament procedure for selecting the best model from a pool of candidates.
\end{itemize}
Furthermore, we provide theoretical analyses and numericals experiments to show that our algorithms adapt to the unknown temporal distribution shift.

\paragraph{Related works.} Model assessment and selection are classical problems in statistical learning \citep{HTF09}. Hold-out and cross-validation are arguably the most popular methods in practice. However, in the presence of distribution shift, the validation data may no longer accurately represent the test cases. This challenge has attracted considerable attention over the past two decades \citep{QSS22}. Existing works mostly focused on the static scenario where the validation data consists of independent samples from a fixed distribution. These methods do not apply when the environment is continuously changing over time. Rolling windows offer a practical solution to this issue and have been widely adopted for learning under temporal changes \citep{BGa07, HKY15, MMM12, MUp23,MEU23,HWa23}. Our method automatically selects a window tailored to the underlying non-stationarity near a given time. The strategy is inspired by the \emph{Goldenshluger-Lepski method} for bandwidth selection in non-parametric estimation \citep{GLe08}.

Adaptation to a non-stationary environment has recently gained popularity in online learning, where the goal is to attain a small cumulative error over an extended time horizon \citep{HSe09, BGZ15, DGS15, JRS15, WLu21,GCa21,BZZ22}. A classical problem in this area is prediction from expert advice \citep{LWa94}, which aims to track the best expert in the long run. Recently, there have also been works on online model selection for bandits and reinforcement learning \citep{ALN17,LPM21}, with a similar goal of tracking the best base algorithm in the long run. 
In contrast, our approach focuses on evaluating a model or selecting from multiple candidates at a specific point in time, leveraging \emph{offline} data collected from the past. Its performance is measured over the current data distribution. Consequently, our problem can be viewed as one of transfer learning, with source data from historical epochs and target data from the current distribution. We point out that our algorithms for the offline setting can be used as a sub-routine for online model selection, where the candidate models may vary across different time points.

Our problem is also related to but different from change-point detection \citep{NHZ16, TOV20}. The latter typically assumes that distribution shifts occur only at a small number of times called \emph{change points}. Our setting allows changes to happen in every time period with arbitrary magnitudes, covering a much broader range of shift patterns.


\paragraph{Outline.} The rest of the paper is organized as follows. \Cref{sec-setup} describes the problem setup. \Cref{sec-assess} and \Cref{sec-select} present our algorithms for model assessment and model selection, respectively. \Cref{sec-experiments} conducts numerical experiments for the proposed algorithms on synthetic and real datasets. Finally, \Cref{sec-discussions} concludes the paper and discusses future directions.

\paragraph{Notation.} Let $\ZZ_+=\{1,2,...\}$ be the set of positive integers. For $n\in\ZZ_+$, let $[n]=\{1,2,...,n\}$. For $x\in\RR$, define $x_+ = \max\{x,0\}$. For non-negative sequences $\{a_n\}_{n=1}^{\infty}$ and $\{b_n\}_{n=1}^{\infty}$, we write $a_n=\cO(b_n)$ if there exists $C>0$ such that for all $n\in\ZZ_+$, $a_n \leq C b_n$. Unless otherwise stated, $a_n\lesssim b_n$ also represents $a_n=\cO(b_n)$. We write $a_n \asymp b_n$ if $a_n = \cO(b_n)$ and $b_n = \cO(a_n)$. We write $a_n = o ( b_n )$ if $a_n / b_n \to 0$ as $n \to \infty$.

\section{Problem Setup}\label{sec-setup}

Let $\cZ$ and $\functionclass$ be spaces of samples and models, respectively. Denote by $\loss :~ \functionclass \times \cZ \to \RR$ a loss function. The quantity $\loss ( f, z )$ measures the loss incurred by a model $f \in \functionclass$ on a sample $z \in \cZ$. 
At time $t \in \ZZ_+$, the quality of a model $f$ is measured by its \emph{generalization error} (also called \emph{risk} or \emph{population loss}) $L_t (f) = \EE_{z \sim \distP_t} \loss ( f , z )$ over the current data distribution $\distP_t$. When the environment changes over time, the distributions $\{ \distP_t \}_{t=1}^{\infty}$ can be different.

Suppose that at each time $t$, we receive a batch of $B_t\in\ZZ_+$ i.i.d.~samples $\datasetB_t = \{ z_{t, i} \}_{i=1}^{B_t}$ from $\distP_t$, independently of the history. We seek to solve the following two questions based on the data $\{ \datasetB_j \}_{j=1}^t$:

\begin{problem}[Model assessment]\label{problem-assessment}
Given a model $f \in \functionclass$, how to estimate its population loss $L_t (f)$?
\end{problem}

\begin{problem}[Model selection]\label{problem-selection}
Given a collection of candidate models $\{ f_r \}_{r=1}^m \subseteq \functionclass$, how to select one with a small population loss? In other words, we want to choose $\widehat{r}  \in [m]$ so that $L_t (f_{\widehat{r}  }) \approx \min_{ r \in [m]} L_t (f_r )$.
\end{problem}

\section{Model Assessment}\label{sec-assess}

In this section we study Problem \ref{problem-assessment}, under the assumption that the loss $\loss$ is bounded.

\begin{assumption}[Bounded loss]\label{assumption-loss-bounded}
The loss function $\loss$ takes values in a given interval $[a, b]$.
\end{assumption}

In fact, we will consider a more general problem.

\begin{problem}[Mean estimation]\label{problem-mean}
Let $\{ \distQ_j \}_{j=1}^{t}$ be probability distributions over $[a, b]$ and $\{ \dataset_j \}_{j=1}^t$ be independent datasets, where $\dataset_j = \{ u_{j, i} \}_{i=1}^{B_j}$ consists of $B_j \geq 1$ i.i.d.~samples from $\distQ_j$. Given $\{ \dataset_j \}_{j=1}^t$, how to estimate the expectation $\mean_t$ of $\distQ_t$?
\end{problem}

Problem \ref{problem-assessment} is a special case of Problem \ref{problem-mean} with $u_{j, i} = \loss ( f , z_{j, i} )$ and $\mean_t=L_t(f)$. To tackle Problem \ref{problem-mean}, a natural idea is to average data from the current period. More generally, we consider a \emph{look-back window} of the $k$ most recent periods, and approximate $\mean_t$ by their sample average:
\begin{align}
\meanhat_{t, k} = \left(\sum_{j=t-k+1}^t B_j\right)^{-1} \sum_{ j=t-k+1 }^t \sum_{i=1}^{B_j} u_{j, i}.
\label{eqn-mu-k-0}
\end{align}

To measure the quality of $\meanhat_{t, k}$, we invoke the Bernstein inequality \citep{BLM13}. See \Cref{sec-lem-Hoeffding-Bernstein-proof} for its proof.

\begin{lemma}[Bernstein bound]\label{lem-Hoeffding-Bernstein}
	Let $\{ x_i \}_{i=1}^n $ be independent random variables taking values in $[a, b]$ almost surely. Define the average variance $\sd^2 = \frac{1}{n} \sum_{i=1}^n \var (x_i)$. For any $\delta \in (0 , 1 )$, with probability at least $1-\delta$, 
	\begin{align*}
	& \bigg| \frac{1}{n} \sum_{i=1}^{n} ( x_i - \EE x_i ) \bigg| 
	\le \sd \sqrt{ \frac{ 2 \log ( 2 / \delta)  }{ n } } + \frac{ 2  (b-a) \log (2 / \delta) }{3 n }.
	\end{align*}
\end{lemma}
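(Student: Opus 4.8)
The plan is to derive the stated confidence bound from the classical Bernstein concentration inequality for sums of independent bounded random variables, which itself follows from a Chernoff argument. First I would set $y_i = x_i - \EE x_i$, so that the $y_i$ are independent, mean-zero, and satisfy $|y_i| \le b-a$ almost surely (since both $x_i$ and $\EE x_i$ lie in $[a,b]$). Write $S = \sum_{i=1}^n y_i$ and $V = \sum_{i=1}^n \var(x_i) = n \sd^2$. A union bound reduces the two-sided statement to controlling $\PP(S \ge t)$ and $\PP(-S \ge t)$ separately, each at level $\delta/2$; this is exactly what produces the factor $2$ inside the logarithm.

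Next I would run the exponential moment method. For $0 < \lambda < 3/(b-a)$, the standard moment generating function estimate for a mean-zero variable bounded by $b-a$ gives $\EE e^{\lambda y_i} \le \exp\!\big( \tfrac{\lambda^2 \var(x_i)/2}{1 - (b-a)\lambda/3} \big)$. This comes from writing $\EE e^{\lambda y_i} - 1 = \sum_{k \ge 2} \lambda^k \EE y_i^k / k!$, bounding $|\EE y_i^k| \le \var(x_i)\,(b-a)^{k-2}$, and summing the resulting geometric-type series (this is where the constant $1/3$ appears). Multiplying over $i$ and applying the Chernoff bound $\PP(S \ge t) \le e^{-\lambda t} \prod_i \EE e^{\lambda y_i}$, then optimizing the exponent over $\lambda$, yields the Bernstein tail bound $\PP(S \ge t) \le \exp\!\big( -\tfrac{t^2/2}{V + (b-a)t/3} \big)$, and symmetrically for $-S$.

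Finally I would invert the tail bound. Setting the right-hand side equal to $\delta/2$ and writing $L = \log(2/\delta)$, I need $t^2/2 = L\big(V + (b-a)t/3\big)$, a quadratic in $t$ whose positive root is $t = \tfrac{(b-a)L}{3} + \sqrt{\tfrac{(b-a)^2 L^2}{9} + 2VL}$. Subadditivity of the square root gives $t \le \tfrac{2(b-a)L}{3} + \sqrt{2VL}$. Dividing by $n$ and substituting $V = n\sd^2$ yields $\tfrac1n |S| \le \sd\sqrt{2L/n} + \tfrac{2(b-a)L}{3n}$ on the intersection of the two one-sided events, which has probability at least $1-\delta$.

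I do not expect a genuine obstacle here: the result is classical. The only mildly delicate bookkeeping is the geometric-series manipulation in the moment generating function bound (tracking the $1/3$) and the quadratic inversion combined with the square-root subadditivity step that converts the two-term denominator in the exponent into the clean two-term confidence bound; everything else is routine.
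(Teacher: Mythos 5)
Your proof is correct and follows essentially the same route as the paper: both start from the Bernstein tail inequality $\PP(S \ge t) \le \exp(-t^2/(2(V + (b-a)t/3)))$ (the paper cites it as inequality (2.10) of Boucheron--Lugosi--Massart, you rederive it via the MGF/Chernoff argument), invert it in $t$, and union-bound the two tails. The only cosmetic difference is the inversion step — the paper completes the square while you solve the quadratic and apply subadditivity of the square root — but these are the same algebra.
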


Denote by $\mean_j$ and $\sd_j^2$ the mean and the variance of $\distQ_j$, respectively. Let $B_{t, k} = \sum_{ j=t-k+1 }^t B_j$,
\[
\mean_{t,k} = \frac{1}{B_{t,k}}\sum_{j=t-k+1}^tB_j\mean_j
\quad\text{and}\quad
\sd_{t,k}^2 = \frac{1}{B_{t,k}}\sum_{j=t-k+1}^tB_j\sd_j^2.
\]
Using \Cref{lem-Hoeffding-Bernstein} and the triangle inequality 
\[
|\meanhat_{t, k} - \mean_{t}| 
\leq 
|\meanhat_{t, k} - \mean_{t, k}| 
+ | \mean_{t, k} - \mean_t |,
\] 
we obtain the following bias-variance decomposition of the approximation error of $\widehat{\mean}_{t,k}$.

\begin{corollary}\label{cor-Hoeffding-Bernstein}
For $t \in \ZZ_+$, $k \in [t]$ and $ \delta \in (0, 1) $, define $M = b  - a$,
\begin{align*}
& \phi (t , k ) = \max_{t-k+1\le j\le t} | \mean_j - \mean_{t} | , \\[6pt]
& \psi(t,k,\delta) = 
\begin{cases}
M ,&\quad\text{if } B_{t,k}=1 \\[6pt]
\displaystyle\sd_{t, k} \sqrt{ \frac{2 \log( 2 / \delta) }{ B_{t,k} } } + \frac{ 2 M \log ( 2 / \delta) }{ 3 B_{t, k} }
,&\quad \text{if } B_{t,k}\ge 2
\end{cases}.
\end{align*}
With probability at least $1-\delta$,
\[
|\meanhat_{t , k} - \mean_t| \leq \phi (t , k ) + \psi(t, k, \delta).
\]
\end{corollary}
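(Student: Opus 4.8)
The plan is to follow the bias-variance split already displayed before the statement, namely $|\meanhat_{t,k}-\mean_t|\le |\meanhat_{t,k}-\mean_{t,k}|+|\mean_{t,k}-\mean_t|$, and to bound the two pieces separately. The second (bias) term is purely deterministic: since $\mean_{t,k}=B_{t,k}^{-1}\sum_{j=t-k+1}^t B_j\mean_j$ is a convex combination of $\mean_{t-k+1},\dots,\mean_t$, subtracting $\mean_t$ and applying the triangle inequality gives $|\mean_{t,k}-\mean_t|\le B_{t,k}^{-1}\sum_{j=t-k+1}^t B_j\,|\mean_j-\mean_t|\le \max_{t-k+1\le j\le t}|\mean_j-\mean_t|=\phi(t,k)$, with no probability involved.

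For the first (stochastic) term I would flatten the doubly-indexed collection $\{u_{j,i}:t-k+1\le j\le t,\ 1\le i\le B_j\}$ into a single list of $n=B_{t,k}$ independent random variables taking values in $[a,b]$. Their sample average is exactly $\meanhat_{t,k}$, the average of their expectations is exactly $\mean_{t,k}$, and their average variance is $\tfrac1{B_{t,k}}\sum_{j=t-k+1}^tB_j\sd_j^2=\sd_{t,k}^2$. Applying \Cref{lem-Hoeffding-Bernstein} with these identifications and failure probability $\delta$ yields, on an event of probability at least $1-\delta$, the bound $|\meanhat_{t,k}-\mean_{t,k}|\le \sd_{t,k}\sqrt{2\log(2/\delta)/B_{t,k}}+\tfrac{2M\log(2/\delta)}{3B_{t,k}}$, which is precisely $\psi(t,k,\delta)$ when $B_{t,k}\ge 2$.

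The degenerate case $B_{t,k}=1$ is handled separately but trivially: then $\meanhat_{t,k}$ is a single observation in $[a,b]$ and $\mean_{t,k}$ is its mean, also in $[a,b]$, so $|\meanhat_{t,k}-\mean_{t,k}|\le b-a=M$ surely, matching the corresponding branch of $\psi$. Adding the bias bound to the stochastic bound on the same event completes the argument. I do not expect any genuine obstacle here; the only point requiring care is the bookkeeping that matches $\meanhat_{t,k}$, $\mean_{t,k}$, and $\sd_{t,k}^2$ with the sample mean, the mean of the means, and the average variance of the flattened sample, so that \Cref{lem-Hoeffding-Bernstein} applies verbatim.
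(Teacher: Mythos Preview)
Your proposal is correct and is exactly the argument the paper has in mind: the text immediately preceding the corollary already records the triangle-inequality split $|\meanhat_{t,k}-\mean_t|\le |\meanhat_{t,k}-\mean_{t,k}|+|\mean_{t,k}-\mean_t|$ and cites \Cref{lem-Hoeffding-Bernstein} for the stochastic piece, so you have simply filled in the bookkeeping (the flattening to $B_{t,k}$ independent variables, the identification of $\sd_{t,k}^2$ as the average variance, and the trivial $B_{t,k}=1$ branch) that the paper leaves implicit.
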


Here $\phi(t,k)$ upper bounds the bias induced by using data from the $k$ most recent periods, while $\psi(t,k,\delta)$ upper bounds the statistical uncertainty associated with $\meanhat_{t,k}$. Ideally, we would like to choose a window $k^*$ that minimizes the bias-variance decomposition $\phi (t , k ) + \psi(t, k, \delta)$. This optimal window $k^*$ depends on the pattern of distribution shift: it is large when the environment is near-stationary, and is small when the environment is heavily fluctuating. However, such structural information is generally unavailable in practice. This is reflected in the fact that both $\phi$ and $\psi$ involve unknown quantities $|\mean_j-\mean_k|$ and $\sigma_{t,k}$ that depend on the unknown distribution shift. As a consequence, direct minimization of $\phi (t , k ) + \psi(t, k, \delta)$ over $k\in[t]$ is infeasible. In the following, we will construct proxies for $\psi$ and $\phi$ that are computable from data.

We first construct a proxy for $\psi(t, k, \delta)$. A natural approximation of $\sd_{t, k}^2$ is the sample variance over the $k$ most recent periods, given by
\begin{equation}
\totalsdhat_{t, k}^2 = \frac{1}{B_{t, k} - 1} \sum_{j=t-k+1}^t \sum_{ i = 1}^{B_j} ( u_{j, i} - \meanhat_{t, k}  )^2 . \label{eqn-sigma-k-0} 
\end{equation}
It has been used for deriving empirical versions of the Bernstein inequality \citep{AMS07,MPo09}. Based on the estimate $\totalsdhat_{t, k}^2$, we define our proxy as
\begin{equation}
\widehat{\psi}(t, k, \delta)
= 
\begin{cases}
M ,&\quad\text{if } B_{t,k}=1 \\[6pt]
\displaystyle \totalsdhat_{t, k}  \sqrt{ \frac{2 \log( 2 / \delta) }{ B_{t, k} } } + \frac{ 8 M \log ( 2 / \delta) }{ 3 ( B_{t, k} - 1 ) }
,&\quad \text{if } B_{t,k}\ge 2
\end{cases}.
\label{eqn-psi-hat}
\end{equation}

As \Cref{lem-Bernstein-variance} shows that with high probability, $\widehat{\psi}$ upper bounds $\psi$ and their gap is not too large. Its proof is deferred to \Cref{sec-lem-Bernstein-variance-proof}.

\begin{lemma}\label{lem-Bernstein-variance}	
For any $\delta\in(0,1)$, define $ $
	\[
	\xi (t, k, \delta) =
	\begin{cases}
	0,&\quad\text{if } B_{t,k} = 1 \\[6pt]
	\displaystyle\sqrt{ \frac{ 4 \log ( 2 / \delta ) }{B_{t, k}  } }  
	\max_{t-k+1\le j\le t} | \mean_{j} - \mean_{t} |
	+
	\frac{ 13 (b-a) \log(2 / \delta) }{ 3( B_{t, k} - 1 ) },&\quad\text{if } B_{t,k}\ge 2
	\end{cases}.
	\]
	Then
	\begin{align*}
	& \PP \left(
	\psi (t , k , \delta ) \leq
	\widehat\psi (t , k , \delta ) 
	\right) \ge 1 - \delta, \\[4pt]
	& \PP \left(
	\widehat\psi (t , k , \delta ) 
	\leq  \psi (t , k , \delta ) + \xi (t , k , \delta) 
	\right) \ge 1 - \delta .
	\end{align*}
\end{lemma}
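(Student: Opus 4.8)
The plan is to reduce both inequalities to a two‑sided comparison between the empirical standard deviation $\totalsdhat_{t,k}$ and its population counterpart $\sd_{t,k}$, and then run an empirical‑Bernstein‑type argument on top of \Cref{lem-Hoeffding-Bernstein}. Write $n=B_{t,k}$, $L=\log(2/\delta)$, $M=b-a$. The case $n=1$ is immediate, since then $\psi=\widehat\psi=M$ and $\xi=0$, so assume $n\ge 2$. Because $\psi$ and $\widehat\psi$ differ only through $\sd_{t,k}$ versus $\totalsdhat_{t,k}$ (with the same multiplier $\sqrt{2L/n}$) and through the low‑order terms $\tfrac{2ML}{3n}$ versus $\tfrac{8ML}{3(n-1)}$, the first claim reduces to showing that on an event of probability $\ge 1-\delta$, $(\sd_{t,k}-\totalsdhat_{t,k})\sqrt{2L/n}\le \tfrac{8ML}{3(n-1)}-\tfrac{2ML}{3n}$, and the second to $\totalsdhat_{t,k}\le\sd_{t,k}+\sqrt2\,\phi(t,k)+O(M\sqrt{L/n})$ with the right constants — here the term $\sqrt2\,\phi(t,k)\le 2\phi(t,k)$ is exactly what the leading term of $\xi$ becomes after one divides the target inequality by $\sqrt{2L/n}$.

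To obtain such bounds I would first unfold $\totalsdhat_{t,k}^2$ by recentering each sample at its own period mean. Setting $\epsilon_{j,i}=u_{j,i}-\mean_j$ and $d_j=\mean_j-\mean_{t,k}$, and noting $\sum_j B_j d_j=0$, one gets the exact identity $(n-1)\totalsdhat_{t,k}^2=\sum_{j,i}\epsilon_{j,i}^2-n(\meanhat_{t,k}-\mean_{t,k})^2+2\sum_j d_j\sum_i\epsilon_{j,i}+\sum_j B_j d_j^2$, and hence $\EE[\totalsdhat_{t,k}^2]=\sd_{t,k}^2+\tfrac n{n-1}D$ with $D=\tfrac1n\sum_j B_j d_j^2$. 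Since $\mean_{t,k}$ is a convex combination of $\{\mean_j\}_{j=t-k+1}^t$, the weighted dispersion $D$ cannot exceed the dispersion around $\mean_t$, so $0\le D\le \phi(t,k)^2$.

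Next I would apply \Cref{lem-Hoeffding-Bernstein} to each independent, bounded sum appearing above: the squared noises $\epsilon_{j,i}^2\in[0,M^2]$ (average variance $\le M^2\sd_{t,k}^2$ — it is $\sd_{t,k}^2$, not $M^2$, that enters), the cross term $\sum_j d_j\sum_i\epsilon_{j,i}$ (average variance $\lesssim (\max_j\sd_j^2)\,D$), and $\meanhat_{t,k}-\mean_{t,k}$ (directly handled by the lemma). Intersecting the relevant events — at most three of them, so the constant inflation from the union bound is absorbed into the generous constants $\tfrac83$ and $\tfrac{13}3$ — yields, on a $(1-\delta)$‑event, $\totalsdhat_{t,k}^2\ge\sd_{t,k}^2-E_-$ and, on another, $\totalsdhat_{t,k}^2\le\sd_{t,k}^2+\tfrac n{n-1}\phi(t,k)^2+E_+$, with explicit $E_\pm$ each built from terms of order $M\sd_{t,k}\sqrt{L/n}$, $\sd_{t,k}^2L/n$ and $M^2L/n$ (for $E_-$ one first combines the cross term against the nonnegative $+nD$, completing the square in $\sqrt{nD}$, so that no $\phi$‑dependence leaks into the first inequality).

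The final and most delicate step is the passage from these variance‑scale bounds to standard‑deviation‑scale bounds. The naive move $\sqrt{\sd_{t,k}^2+M\sd_{t,k}\sqrt{L/n}}\le\sd_{t,k}+(M\sd_{t,k})^{1/2}(L/n)^{1/4}$ is fatal: an $(L/n)^{1/4}$ term cannot be absorbed by the $\sqrt{L/n}$‑ and $L/n$‑scale terms in $\xi$. The fix is to complete the square before taking roots, $\sd_{t,k}^2+M\sd_{t,k}\sqrt{L/n}\le(\sd_{t,k}+\tfrac12 M\sqrt{L/n})^2$, so the cross term converts to a clean $O(M\sqrt{L/n})$ contribution. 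One then uses the elementary facts that $\totalsdhat_{t,k}^2\ge\sd_{t,k}^2-E_-$ forces $\sd_{t,k}\le\totalsdhat_{t,k}+\sqrt{E_-}$ (a two‑line case split on whether $E_-\ge\sd_{t,k}^2$), that $\sqrt{a+b+c}\le\sqrt a+\sqrt b+\sqrt c$ and $\sqrt{a-b-c}\ge\sqrt a-\sqrt b-\sqrt c$ when $a\ge b+c$ (ensured in the non‑trivial regime $n\gtrsim L$; when $n\lesssim L$ both claims follow trivially from $0\le\sd_{t,k},\totalsdhat_{t,k}\le M/2$ together with the slack in the $ML/n$ terms), that $\tfrac n{n-1}\le 2$, and that $\phi(t,k)\le M$, and finally one checks that every resulting numerical constant fits inside $\tfrac23$, $\tfrac83$, $\tfrac{13}3$ and $4$. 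I expect this square‑root conversion — together with the bookkeeping needed to land the $\phi(t,k)$‑dependence exactly where $\xi$ permits it (a bare $\phi(t,k)$ for the upper bound, none for the lower bound) — to be the main obstacle.
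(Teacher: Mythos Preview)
Your approach is genuinely different from the paper's, and the difference is instructive. The paper never works at the variance scale: it invokes a concentration inequality for the sample \emph{standard deviation} (Theorem~10 of Maurer--Pontil 2009, stated as an auxiliary lemma), which gives directly, for independent $[0,1]$-valued data,
\[
\PP\Big(\totalsd_{t,k}\le\totalsdhat_{t,k}+\sqrt{\tfrac{2\log(1/\delta)}{B_{t,k}-1}}\Big)\ge 1-\delta
\qquad\text{and}\qquad
\PP\Big(\totalsdhat_{t,k}\le\totalsd_{t,k}+\sqrt{\tfrac{2\log(1/\delta)}{B_{t,k}-1}}\Big)\ge 1-\delta,
\]
where $\totalsd_{t,k}^2=\EE\totalsdhat_{t,k}^2=\sd_{t,k}^2+\tfrac{1}{B_{t,k}-1}\sum_j B_j(\mean_j-\mean_{t,k})^2$. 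The first claim then follows in two lines from $\sd_{t,k}\le\totalsd_{t,k}$, and the second from the deterministic bound $\totalsd_{t,k}\le\sd_{t,k}+\sqrt{B_{t,k}/(B_{t,k}-1)}\,\max_j|\mean_j-\mean_t|$. No union bounds, no square-root conversion: the Efron--Stein/self-bounding machinery behind the Maurer--Pontil result does all the heavy lifting at the standard-deviation scale directly.

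Your route --- decompose $(n-1)\totalsdhat_{t,k}^2$, apply \Cref{lem-Hoeffding-Bernstein} to each piece, then complete squares to pass to standard deviations --- is a legitimate alternative, and both your decomposition and the trick of absorbing the cross term $2X$ against $+nD$ (so that no $\phi$-dependence leaks into the lower bound) are sound. What your approach buys is self-containedness: everything reduces to the basic Bernstein inequality. What it costs is exactly the constant bookkeeping you flag as the main obstacle, and here I am not convinced you can land inside the stated constants. You need a union bound over two or three Bernstein events per claim, which inflates $\log(2/\delta)$ to $\log(4/\delta)$ or $\log(6/\delta)$; the square-root conversion then compounds this. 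The paper, by contrast, uses a \emph{single} one-sided event with $\log(1/\delta)$, and the constants $8/3$, $13/3$, $4$ are visibly tailored to that margin. If you pursue your route, expect to prove the lemma with somewhat larger absolute constants --- which would still suffice for every downstream application in the paper --- rather than exactly as stated.
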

	
Combining \Cref{cor-Hoeffding-Bernstein} with the first bound in \Cref{lem-Bernstein-variance} immediately gives the following useful corollary.

\begin{corollary}\label{cor-assess-bias-variance-empirical}
Let $\delta\in(0,1)$. With probability at least $1-2t\delta$, it holds
\begin{equation}
|\meanhat_{t , k} - \mean_t| \leq \phi (t , k ) + \widehat\psi(t, k, \delta) ,\quad \forall k \in [t].
\label{eqn-assess-bias-variance-empirical}
\end{equation}
\end{corollary}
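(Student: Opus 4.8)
The plan is to combine the two high-probability guarantees already established — \Cref{cor-Hoeffding-Bernstein} and the first inequality of \Cref{lem-Bernstein-variance} — and then take a union bound over the $t$ possible window sizes.

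Concretely, I would first fix an arbitrary $k \in [t]$. \Cref{cor-Hoeffding-Bernstein} tells us that with probability at least $1-\delta$ the event
\[
A_k := \left\{ |\meanhat_{t,k} - \mean_t| \le \phi(t,k) + \psi(t,k,\delta) \right\}
\]
holds, and the first bound in \Cref{lem-Bernstein-variance} tells us that with probability at least $1-\delta$ the event
\[
B_k := \left\{ \psi(t,k,\delta) \le \widehat\psi(t,k,\delta) \right\}
\]
holds. On $A_k \cap B_k$ we therefore have $|\meanhat_{t,k}-\mean_t| \le \phi(t,k) + \widehat\psi(t,k,\delta)$, and a union bound gives $\PP(A_k \cap B_k) \ge 1 - 2\delta$.

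Next I would union-bound over $k$: the desired event in \eqref{eqn-assess-bias-variance-empirical} is exactly $\bigcap_{k=1}^t (A_k \cap B_k)$, so
\[
\PP\left( \bigcap_{k=1}^t (A_k \cap B_k) \right) \ge 1 - \sum_{k=1}^t \PP\big( (A_k \cap B_k)^c \big) \ge 1 - \sum_{k=1}^t 2\delta = 1 - 2t\delta,
\]
which is the claim.

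There is no substantive obstacle here — the two component lemmas do all the work, and the only thing to be careful about is the bookkeeping of the failure probabilities (two events per window, $t$ windows, hence the factor $2t$). One minor point worth a sentence in the write-up: \Cref{cor-Hoeffding-Bernstein} and \Cref{lem-Bernstein-variance} are each stated for a fixed $(t,k)$, so invoking them separately for each $k$ before union-bounding is legitimate and does not require any joint independence across $k$; the union bound handles the dependence for free.
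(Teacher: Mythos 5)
Your proof is correct and matches the paper's intended argument: the paper simply states that combining \Cref{cor-Hoeffding-Bernstein} with the first bound in \Cref{lem-Bernstein-variance} "immediately gives" the corollary, and the implicit content is exactly the per-$k$ intersection of the two events followed by a union bound over $k\in[t]$, yielding the $1-2t\delta$ guarantee. Your bookkeeping of the failure probabilities and your remark that no independence across $k$ is needed are both accurate.
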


To construct a proxy for $\phi(t, k)$, we borrow ideas from the Goldenshluger-Lepski method for adaptive non-parametric estimation \cite{GLe08}. Define
\begin{equation}
\widehat\phi (t, k, \delta)= \\
\max_{i \in [k]} \bigg( | \meanhat_{t, k} - \meanhat_{t, i } |
- \left[ \widehat{\psi}\left(t, k , \delta \right) + \widehat{\psi}\left(t, i , \delta \right) \right] \bigg)_+.
\label{eqn-phi-hat}
\end{equation}
We now give an interpretation of $\widehat\phi$. In light of the bias-variance decomposition in \Cref{cor-Hoeffding-Bernstein}, the quantity 
\begin{equation}
| \meanhat_{t, k} - \meanhat_{t, i } | - [ \widehat{\psi}\left(t, k , \delta \right) + \widehat{\psi}\left(t, i , \delta \right) ]
\label{eqn-phi-hat-heart}
\end{equation}
can be viewed as a measure of the bias between the windows $k$ and $i$, where subtracting $\widehat{\psi}\left(t, k , \delta \right)$ and $\widehat{\psi}\left(t, i , \delta \right)$ eliminates the stochastic error and teases out the bias. Indeed, as the following lemma shows, $\widehat{\phi}(t,k,\delta)\le 2\phi(t,k)$ holds with high probability. Its proof is given in \Cref{sec-lem-bias-proxy-vs-bias-proof}.
\begin{lemma}\label{lem-bias-proxy-vs-bias}
When the event \eqref{eqn-assess-bias-variance-empirical} happens,
\[
0\le\widehat{\phi}(t,k,\delta)\le 2\phi(t,k).
\]
\end{lemma}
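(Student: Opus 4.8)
The plan is to treat the two inequalities separately. The lower bound $\widehat\phi(t,k,\delta)\ge 0$ is immediate from the definition \eqref{eqn-phi-hat}: it is a maximum of terms of the form $(\,\cdot\,)_+$, each of which is nonnegative; indeed the index $i=k$ already contributes the value $0$, since $|\meanhat_{t,k}-\meanhat_{t,k}|-2\widehat\psi(t,k,\delta)\le 0$. No event is needed for this part.

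For the upper bound I would fix an arbitrary $i\in[k]$ and bound the expression inside the maximum. First, by the triangle inequality through $\mean_t$,
\[
|\meanhat_{t,k}-\meanhat_{t,i}| \le |\meanhat_{t,k}-\mean_t| + |\meanhat_{t,i}-\mean_t|.
\]
Since we are on the event \eqref{eqn-assess-bias-variance-empirical}, which by \Cref{cor-assess-bias-variance-empirical} is the version holding simultaneously for \emph{all} window lengths in $[t]$, we may apply it to both $k$ and $i$: $|\meanhat_{t,k}-\mean_t|\le\phi(t,k)+\widehat\psi(t,k,\delta)$ and $|\meanhat_{t,i}-\mean_t|\le\phi(t,i)+\widehat\psi(t,i,\delta)$. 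Substituting these two bounds and cancelling the two $\widehat\psi$ terms yields
\[
|\meanhat_{t,k}-\meanhat_{t,i}| - \big[\widehat\psi(t,k,\delta)+\widehat\psi(t,i,\delta)\big] \le \phi(t,k) + \phi(t,i).
\]

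It then remains to observe the monotonicity of $\phi$ in its window argument: for $i\le k$ the index set $\{t-i+1,\dots,t\}$ is contained in $\{t-k+1,\dots,t\}$, so $\phi(t,i)=\max_{t-i+1\le j\le t}|\mean_j-\mean_t|\le\max_{t-k+1\le j\le t}|\mean_j-\mean_t|=\phi(t,k)$. Hence the expression inside the maximum is at most $2\phi(t,k)$, and since $2\phi(t,k)\ge 0$ its positive part is at most $2\phi(t,k)$ as well. Taking the maximum over $i\in[k]$ gives $\widehat\phi(t,k,\delta)\le 2\phi(t,k)$. The argument is essentially a triangle inequality plus this nested-window monotonicity, so there is no genuine obstacle; the only subtlety worth flagging is that one must invoke the uniform-over-$k$ form of \eqref{eqn-assess-bias-variance-empirical} so that it applies to the two windows $k$ and $i$ at once — which is exactly what \Cref{cor-assess-bias-variance-empirical} supplies.
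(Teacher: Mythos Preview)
Your proposal is correct and follows essentially the same argument as the paper: triangle inequality through $\mean_t$, apply the event \eqref{eqn-assess-bias-variance-empirical} to both windows $k$ and $i$, cancel the $\widehat\psi$ terms, and use the nested-window monotonicity $\phi(t,i)\le\phi(t,k)$. Your treatment of the lower bound and your explicit remark about needing the uniform-in-$k$ form of \eqref{eqn-assess-bias-variance-empirical} are fine additions, but the core is identical.
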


We take the positive part in \eqref{eqn-phi-hat} so that when the quantity \eqref{eqn-phi-hat-heart} is negative, we regard the bias as dominated by the stochastic error and hence negligible. Taking maximum over all windows $i\in[k]$ makes sure that we detect all possible biases between window $k$ and the smaller windows. 

Replacing $\psi$ and $\phi$ with their proxies $\widehat{\psi}$ and $\widehat\phi$ gives \Cref{alg-mean}. We note that the quantities $\meanhat_{t, k}$ and $\totalsdhat_{t, k}$ can be conveniently computed from summary statistics of individual datasets. Define empirical first and second momends in the $j$-th time period, $\meanhat_j =  B_j^{-1} \sum_{i=1}^{B_j} u_{j, i} $ and $\widehat\omega_j = B_j^{-1} \sum_{i=1}^{B_j} u_{j, i}^2$. Then
	\begin{align}
	& \meanhat_{t, k} = \frac{1}{B_{t, k}} \sum_{ j=t-k+1 }^t B_{j} \meanhat_{j}  , 
	\label{eqn-mu-k} \\
	& \totalsdhat_{t, k}^2 
	= \frac{B_{t, k}}{B_{t, k} - 1} \cdot 
	\bigg(
	\frac{1}{B_{t, k}} \sum_{ j=t-k+1 }^t B_{j} \widehat\omega_{j}  -  \meanhat_{t, k}^2
	\bigg) .
	\label{eqn-sigma-k} 
	\end{align}

\begin{algorithm}[t]
	\begin{algorithmic}
	\STATE {\bf Input:} Data $\{ \dataset_j \}_{j=1}^t$, hyperparameters $\delta'$ and $M$. 
	\\
	\FOR{$k = 1,\cdots, t$}
	\STATE Compute $\meanhat_{t,k}$, $\totalsdhat_{t,k}$, $\widehat{\psi}(t,k,\delta')$ and $\widehat\phi(t,k,\delta')$ according to \eqref{eqn-mu-k-0}, \eqref{eqn-sigma-k-0}, \eqref{eqn-psi-hat} and \eqref{eqn-phi-hat}.
	\ENDFOR
	\STATE Choose any $	\widehat{k} \in \argmin_{ k \in [t] }  \{ \widehat\phi (t, k, \delta')  + \widehat{\psi} (t , k , \delta')  \}$.
	\STATE {\bf Output:} $\meanhat_{ t , \widehat{k} }$.
	\caption{Adaptive Rolling Window for Mean Estimation (Problem \ref{problem-mean})}
	\label{alg-mean}
	\end{algorithmic}
\end{algorithm}

We now present theoretical guarantees for \Cref{alg-mean}.
\begin{theorem}[Oracle inequality]\label{thm-assess-oracle}
Let Assumption \ref{assumption-loss-bounded} hold. Choose $\delta \in (0, 1)$ and take $\delta' = \delta/(3t)$ in \Cref{alg-mean}. With probability at least $1 - \delta $, the output of \Cref{alg-mean} satisfies
\begin{align*}
| \meanhat_{t, \widehat{k} } - \mean_{t} | \lesssim
\min_{k \in [t]} \{
\phi (t , k ) + \psi(t, k, \delta) 
\}.
\end{align*}
Here $\lesssim$ only hides a logarithmic factor of $t$ and $\delta^{-1}$.
\end{theorem}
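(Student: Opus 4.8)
The plan is to run the standard Goldenshluger--Lepski argument through the empirical proxies. First I would condition on the good event from \Cref{cor-assess-bias-variance-empirical}, which holds with probability at least $1 - 2t\delta'$, together with the second bound in \Cref{lem-Bernstein-variance} applied at the (deterministic) oracle window $k^* \in \argmin_{k \in [t]} \{ \phi(t,k) + \psi(t,k,\delta') \}$, which costs another $\delta'$; with $\delta' = \delta/(3t)$ a union bound gives total failure probability at most $\delta$. On this event we have simultaneously, for all $k \in [t]$, $|\meanhat_{t,k} - \mean_t| \le \phi(t,k) + \widehat\psi(t,k,\delta')$ and $0 \le \widehat\phi(t,k,\delta') \le 2\phi(t,k)$ (via \Cref{lem-bias-proxy-vs-bias}), plus $\widehat\psi(t,k^*,\delta') \le \psi(t,k^*,\delta') + \xi(t,k^*,\delta')$.

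The core estimate is a triangle-inequality decomposition through the oracle window: write
\[
|\meanhat_{t,\widehat k} - \mean_t| \le |\meanhat_{t,\widehat k} - \meanhat_{t,k^*}| + |\meanhat_{t,k^*} - \mean_t|.
\]
The second term is $\le \phi(t,k^*) + \widehat\psi(t,k^*,\delta')$ by \Cref{cor-assess-bias-variance-empirical}. For the first term I would split into the two cases $\widehat k \le k^*$ and $\widehat k > k^*$. If $\widehat k \le k^*$: by definition of $\widehat\phi$ (which takes a max over $i \in [k^*]$, so includes $i = \widehat k$), $|\meanhat_{t,k^*} - \meanhat_{t,\widehat k}| \le \widehat\phi(t,k^*,\delta') + \widehat\psi(t,k^*,\delta') + \widehat\psi(t,\widehat k,\delta')$. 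If $\widehat k > k^*$: symmetrically, $|\meanhat_{t,\widehat k} - \meanhat_{t,k^*}| \le \widehat\phi(t,\widehat k,\delta') + \widehat\psi(t,\widehat k,\delta') + \widehat\psi(t,k^*,\delta')$. In either case the bound is at most $\widehat\phi(t,\widehat k,\delta') + \widehat\phi(t,k^*,\delta') + \widehat\psi(t,\widehat k,\delta') + \widehat\psi(t,k^*,\delta')$ (one of the $\widehat\phi$ terms being vacuous/nonnegative in the first case). Now invoke the defining optimality of $\widehat k$: $\widehat\phi(t,\widehat k,\delta') + \widehat\psi(t,\widehat k,\delta') \le \widehat\phi(t,k^*,\delta') + \widehat\psi(t,k^*,\delta')$. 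Combining everything, $|\meanhat_{t,\widehat k} - \mean_t| \lesssim \widehat\phi(t,k^*,\delta') + \widehat\psi(t,k^*,\delta')$.

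To finish, I would convert the proxies at $k^*$ back to the true bias--variance quantities: $\widehat\phi(t,k^*,\delta') \le 2\phi(t,k^*)$ by \Cref{lem-bias-proxy-vs-bias}, and $\widehat\psi(t,k^*,\delta') \le \psi(t,k^*,\delta') + \xi(t,k^*,\delta')$ from \Cref{lem-Bernstein-variance}. It then remains to check that $\xi(t,k^*,\delta')$ is itself controlled by $\phi(t,k^*) + \psi(t,k^*,\delta')$ up to logarithmic factors: the first term of $\xi$ is $\sqrt{4\log(2/\delta')/B_{t,k^*}}\cdot \max_{j}|\mean_j - \mean_t| \le \sqrt{4\log(2/\delta')}\,\phi(t,k^*)$ (a log factor times the bias), and the second term of $\xi$ is $O(M\log(2/\delta')/(B_{t,k^*}-1))$, which is of the same order as the second term of $\psi(t,k^*,\delta')$ up to constants (handling the edge case $B_{t,k^*}=1$ separately, where $\psi = M$ dominates everything). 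Hence $\widehat\phi(t,k^*,\delta') + \widehat\psi(t,k^*,\delta') \lesssim \phi(t,k^*) + \psi(t,k^*,\delta')$, and since $\psi(t,k^*,\delta') = \psi(t,k^*,\delta/(3t))$ differs from $\psi(t,k^*,\delta)$ only by a logarithmic factor in $t$, the claimed oracle inequality follows by the choice of $k^*$ as the minimizer.

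The main obstacle I anticipate is purely bookkeeping rather than conceptual: carefully tracking the constants and the $\log t$ inflations through the $\delta \mapsto \delta' = \delta/(3t)$ substitution, and ensuring the $B_{t,k}=1$ boundary cases in the piecewise definitions of $\psi$, $\widehat\psi$, $\xi$ do not break any of the inequalities. The one genuinely delicate point is verifying that $\xi(t,k^*,\delta')$ is absorbed into $\phi(t,k^*)+\psi(t,k^*,\delta')$ — this is where the specific form of $\xi$ (its leading term being $\sqrt{\log(1/\delta')/B_{t,k^*}}$ times the bias, not something larger) is essential, and it is worth stating as a short separate computation.
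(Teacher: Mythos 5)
Your proposal is correct and follows essentially the same Goldenshluger--Lepski argument as the paper: the paper isolates the deterministic step as \Cref{lem-GL}, bounding $|\meanhat_{t,\widehat{k}} - \mean_t|$ against an arbitrary $k$ and then taking a minimum, whereas you anchor the triangle-inequality decomposition at the fixed oracle window $k^*$ from the start. The case split on whether $\widehat{k} \le k^*$ or $\widehat{k} > k^*$ mirrors the paper's split on $k \le \widehat{k}$ versus $k > \widehat{k}$, the use of the definition of $\widehat\phi$, the optimality of $\widehat{k}$, and \Cref{lem-bias-proxy-vs-bias} all match, and your final verification that $\xi(t,k^*,\delta') \lesssim \phi(t,k^*) + \psi(t,k^*,\delta')$ is exactly the computation the paper performs inside the proof of \Cref{thm-assess-oracle}. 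The one genuine difference is economical rather than conceptual: by applying the second bound of \Cref{lem-Bernstein-variance} only at the deterministic $k^*$, you spend $\delta'$ rather than $t\delta'$ of the failure budget on that event, so the same conclusion would hold with $\delta' = \delta/(2t+1)$; the paper's choice of applying it uniformly over $k \in [t]$ is needed only because it converts inside a $\min_k$ over a random index, but since the minimizer of $\phi + \psi$ is deterministic this extra cost is avoidable, as you observe. Your resulting constants are slightly worse (roughly $5\phi + 3\widehat\psi$ versus the paper's $3(\phi + \widehat\psi)$ at the oracle window) but this is immaterial to the stated $\lesssim$ bound.
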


\Cref{thm-assess-oracle} states that the selected window $\widehat{k}$ is near-optimal for the error bound in \Cref{cor-Hoeffding-Bernstein} derived from bias-variance decomposition.  We illustrate this oracle property under the following distribution shift patterns.

\begin{example}[Change point]\label{eg-local-stationarity}
Suppose that the environment remained unchanged over the last $K$ periods but had been very different before, i.e.~$\distQ_{t-K}\neq \distQ_{t-K+1} = \cdots = \distQ_t$. If $K$ were known, one could take the window size $K$ and output $\meanhat_{t,K}$ as an estimate of $\mean_t$. By \Cref{thm-assess-oracle}, $\meanhat_{t, \widehat{k} } $ is at least comparable to $\meanhat_{t,K}$ in terms of the estimation error: up to a logarithmic factor,
\[
|\meanhat_{t,\widehat{k}} - \mu_t| \lesssim \frac{\sd_{t} }{ \sqrt{ B_{t,K } } } + \frac{ 1 }{ B_{t, K}  }.
\] 
Therefore, \Cref{alg-mean} automatically adapts to the local stationarity and is comparable to using $B_{t, K}$ i.i.d.~samples.
\end{example}

\begin{example}[Bounded drift]\label{eg-bounded-drift}
Suppose that the distribution shift between consecutive periods is bounded, i.e.\ there exists $\Delta>0$ such that for each $j\in\ZZ_+$, $|\mu_{j+1}-\mu_j|\le\Delta$. This is a common assumption in the literature \citep{Bar92, HLo94, BLo97, MMM12}. The quantity $\Delta$ characterizes the non-stationarity: as $\Delta$ grows larger, the environment is allowed to fluctuate more wildly. For simplicity, we further assume $B_j=1$ for all $j\in\ZZ_+$. In this case, $\phi(t,k) \le (k-1)\Delta$, so the bias-variance decomposition in \Cref{cor-Hoeffding-Bernstein} becomes
\[
|\widehat{\mean}_{t,k} - \mean| \lesssim (k-1)\Delta + \sqrt{\frac{1}{k}}
\]
up to a logarithmic factor. If $\Delta$ were known, then one could pick the optimal window size $k^* \asymp \Delta^{-2/3}$, with an estimation error of $\cO(\Delta^{1/3})$ which is known to be optimal \citep{BLo97}. \Cref{thm-assess-oracle} shows that without knowing $\Delta$, \Cref{alg-mean} achieves the optimal order of estimation error.
\end{example}

We now turn to the proof of \Cref{thm-assess-oracle}. A key ingredient is the following lemma, which can be seen as an empirical version of \Cref{thm-assess-oracle} with $\psi$ replaced by $\widehat{\psi}$.

\begin{lemma}\label{lem-GL}
When the event \eqref{eqn-assess-bias-variance-empirical} happens,
\[
 |\widehat{\mu}_{t,\widehat{k}} - \mu_t |
\le 
3\min_{k\in[t]}\left\{
\phi(t,k) + \widehat{\psi}(t,k,\delta)
\right\}.
\]
\end{lemma}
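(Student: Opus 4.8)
The plan is to run the standard Goldenshluger--Lepski argument. Throughout I work on the event \eqref{eqn-assess-bias-variance-empirical}, on which \Cref{lem-bias-proxy-vs-bias} gives $0 \le \widehat\phi(t,j,\delta) \le 2\phi(t,j)$ for every $j\in[t]$, and I fix an arbitrary target window $k\in[t]$; the goal is to show $|\widehat{\mu}_{t,\widehat{k}} - \mu_t| \le 3\big(\phi(t,k) + \widehat\psi(t,k,\delta)\big)$, after which taking the minimum over $k\in[t]$ yields the lemma. Two elementary facts will be used repeatedly: $k\mapsto\phi(t,k)$ is non-decreasing, since $\phi(t,k)$ is a maximum of $|\mu_j-\mu_t|$ over the enlarging index set $\{t-k+1,\dots,t\}$; and $\widehat\psi(t,k,\delta)\ge 0$, which is clear from its definition \eqref{eqn-psi-hat}.

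The workhorse inequality is the optimality of $\widehat{k}$ in \Cref{alg-mean} combined with \Cref{lem-bias-proxy-vs-bias}: for every $k\in[t]$,
\[
\widehat\phi(t,\widehat{k},\delta) + \widehat\psi(t,\widehat{k},\delta)
\;\le\; \widehat\phi(t,k,\delta) + \widehat\psi(t,k,\delta)
\;\le\; 2\phi(t,k) + \widehat\psi(t,k,\delta).
\]
I then split on $\widehat{k}$ versus $k$. If $\widehat{k}\ge k$, then $k\in[\widehat{k}]$, so the definition \eqref{eqn-phi-hat} of $\widehat\phi(t,\widehat{k},\delta)$ applied with index $i=k$ gives $|\widehat{\mu}_{t,\widehat{k}} - \widehat{\mu}_{t,k}| \le \widehat\phi(t,\widehat{k},\delta) + \widehat\psi(t,\widehat{k},\delta) + \widehat\psi(t,k,\delta) \le 2\phi(t,k) + 2\widehat\psi(t,k,\delta)$ by the displayed bound. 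Combining this with $|\widehat{\mu}_{t,k} - \mu_t| \le \phi(t,k) + \widehat\psi(t,k,\delta)$, which is exactly \eqref{eqn-assess-bias-variance-empirical} at window $k$, and the triangle inequality closes this case with the constant $3$.

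If instead $\widehat{k}<k$, I cannot compare $\widehat{\mu}_{t,\widehat{k}}$ to $\widehat{\mu}_{t,k}$ through $\widehat\phi(t,\widehat{k},\delta)$ (because $k\notin[\widehat{k}]$), so I bound $|\widehat{\mu}_{t,\widehat{k}}-\mu_t|$ directly via \eqref{eqn-assess-bias-variance-empirical} at window $\widehat{k}$: $|\widehat{\mu}_{t,\widehat{k}}-\mu_t| \le \phi(t,\widehat{k}) + \widehat\psi(t,\widehat{k},\delta)$. Here $\phi(t,\widehat{k})\le\phi(t,k)$ by monotonicity, and $\widehat\psi(t,\widehat{k},\delta) \le \widehat\phi(t,\widehat{k},\delta) + \widehat\psi(t,\widehat{k},\delta) \le 2\phi(t,k) + \widehat\psi(t,k,\delta)$ using $\widehat\phi\ge 0$ together with the displayed bound; summing and using $\widehat\psi\ge0$ gives $|\widehat{\mu}_{t,\widehat{k}}-\mu_t| \le 3\phi(t,k) + \widehat\psi(t,k,\delta) \le 3\big(\phi(t,k)+\widehat\psi(t,k,\delta)\big)$. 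Taking the minimum over $k\in[t]$ finishes the proof. There is no real obstacle beyond keeping the direction of each inequality and the roles of the two cases straight; the only mildly delicate point is the case $\widehat{k}<k$, where the direct comparison to window $k$ is unavailable and one must instead lean on the optimality of $\widehat{k}$ to control the stochastic proxy $\widehat\psi(t,\widehat{k},\delta)$.
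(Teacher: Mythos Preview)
Your proof is correct and follows essentially the same Goldenshluger--Lepski argument as the paper: the same two-case split on $k\le\widehat{k}$ versus $k>\widehat{k}$, the same use of the definition of $\widehat\phi$ to control $|\widehat\mu_{t,\widehat{k}}-\widehat\mu_{t,k}|$ in the first case, and the same combination of monotonicity of $\phi$, nonnegativity of $\widehat\phi$, and optimality of $\widehat{k}$ in the second. The only cosmetic difference is that you package the optimality step and \Cref{lem-bias-proxy-vs-bias} into a single ``workhorse inequality'' upfront, whereas the paper invokes them inline.
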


\begin{proof}
For any $k \in [ \widehat{k} ]$,
\begin{align*}
|\widehat{\mean}_{t,\widehat{k}} - \mean_t| 
&
\le 
|\widehat{\mean}_{t,\widehat{k}} - \widehat{\mean}_{t,k}| + |\widehat{\mean}_{t,k} - \mean_t| \\[4pt]
&\le 
\left[ \widehat{\phi}(t,\widehat{k},\delta) + \widehat{\psi}(t,\widehat{k},\delta) + \widehat{\psi}(t,k,\delta) \right] +
\left[\phi(t,k) + \widehat{\psi}(t,k,\delta)\right] \tag{by \eqref{eqn-phi-hat} and \eqref{eqn-assess-bias-variance-empirical} } \\[4pt]
&=
\left[\widehat{\phi}(t,\widehat{k},\delta) + \widehat{\psi}(t,\widehat{k},\delta)\right] + \left[\phi(t,k) + 2\widehat{\psi}(t,k,\delta)\right] \\[4pt]
&\le 
\left[\widehat{\phi}(t,k,\delta) + \widehat{\psi}(t,k,\delta)\right] + \left[\phi(t,k) + 2\widehat{\psi}(t,k,\delta)\right] \tag{by the definition of $\widehat{k}$} \\[4pt]
&\le 
\left[2\phi(t,k) + \widehat{\psi}(t,k,\delta)\right] + \left[\phi(t,k) + 2\widehat{\psi}(t,k,\delta)\right] \tag{by \Cref{lem-bias-proxy-vs-bias}} \\[4pt]
&=
3\left[\phi(t,k) + \widehat{\psi}(t,k,\delta)\right].
\end{align*}
On the other hand, for any $k \in \{ \widehat{k} + 1 , \cdots , t\}$,
\begin{align*}
|\widehat{\mean}_{t,\widehat{k}} - \mean_t| 
&\le 
\phi(t,\widehat{k}) + \widehat{\psi}(t,\widehat{k},\delta) \tag{by \eqref{eqn-assess-bias-variance-empirical}} \\[4pt]
&\le 
\phi(t,k) + \left[\widehat{\psi}(t,\widehat{k},\delta) + \widehat{\phi}(t,\widehat{k},\delta)\right] \tag{by \Cref{lem-bias-proxy-vs-bias}} \\[4pt]
&\le 
\phi(t,k) + \left[\widehat{\psi}(t,k,\delta) + \widehat{\phi}(t,k,\delta)\right] \tag{by the definition of $\widehat{k}$} \\[4pt]
&\le 
3\left[\phi(t,k) + \widehat{\psi}(t,k,\delta)\right]. \tag{by \Cref{lem-bias-proxy-vs-bias}}
\end{align*}
The proof is finished by taking the minimum over $k\in[t]$. 
\end{proof}

Combining \Cref{lem-GL} and the second bound in \Cref{lem-Bernstein-variance} proves \Cref{thm-assess-oracle}. We provide a full proof in \Cref{sec-thm-assess-oracle-proof}, along with a more precise bound.

\begin{remark}[Boundedness of loss]
The boundedness Assumption \ref{assumption-loss-bounded} can be relaxed to a light-tailed condition. For instance, we may assume that $\ell(f,z)$ is sub-exponential for $f\in\mathcal{F}$ and $z\sim \distP_j$, and then apply a standard truncation argument. Note that for any sub-exponential random variables $\{ v_i \}_{i=1}^n$, the bound $\max_{i\in[n]}|v_i| = \mathcal{O}(\log n)$ holds with high probability. Hence, we can truncate the random variables at a logarithmic level and then apply Bernstein's inequality. This will only incur an extra logarithmic factor in the oracle inequality.
\end{remark}

\begin{remark}[Logarithmic factor in the oracle inequality]
The $\cO(\log t)$ dependence in \Cref{thm-assess-oracle} comes from taking union bound on \Cref{cor-Hoeffding-Bernstein} over all $k\in[t]$. It is in fact possible to improve it to a sharp bound $\cO(\log\log t)$, by using advanced techniques from the law of iterated logarithm and martingale concentration \citep{JMN14,BRa16}.
\end{remark}


\section{Model Selection}\label{sec-select}

In this section, we consider Problem \ref{problem-selection}. We will first study the case of selection between two models, and then extend the approach to the general case of $m\in\ZZ_+$ models.

\subsection{Warmup: Model Comparison}

We first consider the case $m=2$, where the goal is to compare two models $f_1$ and $f_2$, and choose the better one. 
As in \Cref{sec-assess}, using a look-back window $k\in[t]$, we can estimate $L_t(f)$ by
\[
\widehat{L}_{t,k}(f) = \frac{1}{B_{t,k}}\sum_{j=t-k+1}^t\sum_{i=1}^{B_j}\loss(f,z_{j,i}).
\]
We will choose $k\in[t]$ and return 
\[
\widehat{r}_k\in\argmin_{r\in[2]}\widehat{L}_{t,k}(f_r).
\] 
Our approach is based on the following key observation, proved in \Cref{sec-lem-from-assess-to-compare-proof}.

\begin{lemma}\label{lem-from-assess-to-compare}
For every $k\in[t]$, the index $\widehat{r}_k$ satisfies
\[
L_t(f_{\widehat{r}_k}) - \min_{r\in[2]}L_t(f_r) \\
\le 
\left|\big[\widehat{L}_{t,k}(f_1) - \widehat{L}_{t,k}(f_2)\big] - \big[L_t(f_1) - L_t(f_2)\big]\right|.
\]
\end{lemma}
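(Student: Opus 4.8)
The plan is a short two-case argument that reduces everything to sign bookkeeping. Introduce the shorthand $\Delta = L_t(f_1) - L_t(f_2)$ for the true risk gap and $\widehat\Delta = \widehat{L}_{t,k}(f_1) - \widehat{L}_{t,k}(f_2)$ for its empirical counterpart, so that the right-hand side of the claimed inequality is exactly $|\widehat\Delta - \Delta|$. Without loss of generality assume $L_t(f_1) \le L_t(f_2)$, i.e.\ $\Delta \le 0$ and $\min_{r\in[2]} L_t(f_r) = L_t(f_1)$; the other case is symmetric after swapping the roles of the two indices. The quantity to bound is then $L_t(f_{\widehat r_k}) - L_t(f_1)$.

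First I would dispose of the easy case $\widehat r_k = 1$: here $L_t(f_{\widehat r_k}) - L_t(f_1) = 0$, which is trivially at most $|\widehat\Delta - \Delta|$ since the latter is nonnegative. The substantive case is $\widehat r_k = 2$. By the definition of $\widehat r_k$ as a minimizer of $r\mapsto \widehat L_{t,k}(f_r)$ over $[2]$, selecting index $2$ forces $\widehat L_{t,k}(f_2) \le \widehat L_{t,k}(f_1)$, that is $\widehat\Delta \ge 0$. Combining this with $\Delta \le 0$ gives $\widehat\Delta - \Delta \ge 0$, and moreover
\[
L_t(f_{\widehat r_k}) - L_t(f_1) \;=\; L_t(f_2) - L_t(f_1) \;=\; -\Delta \;\le\; \widehat\Delta - \Delta \;=\; |\widehat\Delta - \Delta|,
\]
which is precisely the desired bound.

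I do not expect any real obstacle here: the only thing to be careful about is keeping track of the signs of $\Delta$ and $\widehat\Delta$ under the ``WLOG'' normalization, and noting that the optimality of $\widehat r_k$ is used exactly once, to convert the event $\{\widehat r_k = 2\}$ into the inequality $\widehat\Delta \ge 0$. For the write-up I would phrase the two cases uniformly by observing that $L_t(f_{\widehat r_k}) - \min_r L_t(f_r) = (L_t(f_{\widehat r_k}) - L_t(f_{r^\star}))_+$ where $r^\star$ is a true minimizer, and that this is nonzero only when $\widehat r_k \ne r^\star$, at which point the empirical order and the true order disagree and the displayed chain applies verbatim.
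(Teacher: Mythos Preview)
Your proof is correct. The approach differs from the paper's: instead of a case split and sign tracking, the paper uses a uniform add-and-subtract decomposition. For any $f\in\{f_1,f_2\}$ it writes
\[
L_t(f_{\widehat r_k}) - L_t(f)
= \big[L_t(f_{\widehat r_k}) - \widehat L_{t,k}(f_{\widehat r_k})\big] + \big[\widehat L_{t,k}(f_{\widehat r_k}) - L_t(f)\big],
\]
bounds the second bracket using the optimality of $\widehat r_k$ (replacing $\widehat L_{t,k}(f_{\widehat r_k})$ by $\widehat L_{t,k}(f)$), regroups to obtain $[\widehat L_{t,k}(f)-\widehat L_{t,k}(f_{\widehat r_k})]-[L_t(f)-L_t(f_{\widehat r_k})]$, and then takes absolute values and minimizes over $f$. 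Your argument is arguably more transparent about where the inequality comes from (the empirical and true orderings disagree), while the paper's avoids the WLOG and the two cases entirely and generalizes immediately to more than two models if one wanted it.
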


Define
\[
\Delta_j = L_j(f_1) - L_j(f_2)
\] 
and $\widehat{\Delta}_{t,k}=\widehat{L}_{t,k}(f_1) - \widehat{L}_{t,k}(f_2)$. Then, $\Delta_j$ is the performance gap between $f_1$ and $f_2$ at time $j$, and $\widehat{\Delta}_{t,k}$ is a sample average. By \Cref{lem-from-assess-to-compare}, it suffices to choose $k$ such that $|\widehat{\Delta}_{t,k} - \Delta_t|$ is small. That is, \emph{an accurate estimate of the performance gap guarantees near-optimal selection}.

This reduces the problem to Problem \ref{problem-mean}, with $u_{j,i} = \loss(f_1,z_{j,i}) - \loss(f_2,z_{j,i})$ and thus $\mean_t = \Delta_t$. We can then readily apply \Cref{alg-mean}. The detailed description is given in \Cref{alg-compare}.

\begin{algorithm}[t]
	\begin{algorithmic}
\STATE {\bf Input:} Models $f_1$ and $f_2$, data $\{ \datasetB_j \}_{j=1}^t$, hyperparameters $\delta'$ and $M$. 
	\STATE Let $u_{j,i} = \loss(f_1,z_{j,i}) - \loss(f_2,z_{j,i})$ and $\dataset_j = \{u_{j,i}\}_{i=1}^{B_j}$.
	\STATE Run \Cref{alg-mean} with inputs $\{\dataset_j\}_{j=1}^t$, $\delta'$ and $M$ to obtain $\widehat{\Delta}_{t,\widehat{k}}$. 
	\STATE Let 
	\[
	\widehat{r}_{\widehat{k}} = 
	\begin{cases}
	1,&\quad\text{if } \widehat{\Delta}_{t,\widehat{k}} \le 0 \\
	2,&\quad\text{if } \widehat{\Delta}_{t,\widehat{k}} > 0
	\end{cases}.
	\]
	\STATE {\bf Output:} $\widehat{f} = f_{\widehat{r}_{\widehat{k}}}$.
	\caption{Adaptive Rolling Window for Model Comparison}
	\label{alg-compare}
	\end{algorithmic}
\end{algorithm}

\Cref{thm-assess-oracle} and \Cref{lem-from-assess-to-compare} directly yield the following guarantee of \Cref{alg-compare}.

\begin{theorem}[Oracle inequality]\label{thm-compare-oracle-slow}
Let Assumption \ref{assumption-loss-bounded} hold. Choose $\delta\in(0,1)$ and take $\delta' = \delta / (3t)$ in \Cref{alg-compare}. With probability at least $1-\delta$, \Cref{alg-compare} outputs $\widehat{f}$ satisfying
\[
L_t(\widehat{f}) - \min_{r\in[2]}L_t(f_r) \\
\lesssim
\min_{k\in[t]}\left\{\max_{t-k+1\le j\le t}|\Delta_j - \Delta_t|
+
\frac{\widetilde{\sigma}_{t,k}}{\sqrt{B_{t,k}}} + \frac{1}{B_{t,k}}
\right\},
\]
where 
\[
\widetilde{\sigma}_{t,k}^2 = \frac{1}{B_{t,k}}\sum_{j=t-k+1}^t B_j \var_{z\sim\distP_j} [ \ell(f_1,z) - \ell(f_2,z) ]
\]
and $\lesssim$ only hides a logarithmic factor of $t$ and $\delta^{-1}$.
\end{theorem}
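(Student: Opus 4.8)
The plan is to reduce the claim to the mean-estimation oracle inequality of \Cref{thm-assess-oracle}, using the reduction already described before \Cref{alg-compare}. Concretely, \Cref{alg-compare} runs \Cref{alg-mean} on the derived datasets $\dataset_j = \{u_{j,i}\}_{i=1}^{B_j}$ with $u_{j,i} = \ell(f_1,z_{j,i}) - \ell(f_2,z_{j,i})$. Under \Cref{assumption-loss-bounded} these take values in $[a-b,\,b-a]$, an interval of width $2(b-a)$, so one runs \Cref{alg-mean} with boundedness parameter $M = 2(b-a)$. In this instance $\distQ_j$ is the law of $\ell(f_1,z) - \ell(f_2,z)$ under $z\sim\distP_j$; hence its mean is $\mean_j = \Delta_j$, its variance is $\var_{z\sim\distP_j}[\ell(f_1,z)-\ell(f_2,z)]$, the weighted average variance $\sd_{t,k}^2$ of \Cref{sec-assess} equals $\widetilde{\sigma}_{t,k}^2$, and the running estimate $\widehat{\mean}_{t,k}$ is exactly $\widehat{\Delta}_{t,k}$. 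In particular the output of the inner call is $\widehat{\Delta}_{t,\widehat{k}}$.

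Next I would pin down the selected index. Since $\widehat{\Delta}_{t,k} = \widehat{L}_{t,k}(f_1) - \widehat{L}_{t,k}(f_2)$, the rule in \Cref{alg-compare} returns $\widehat{r}_{\widehat{k}} = 1$ exactly when $\widehat{L}_{t,\widehat{k}}(f_1) \le \widehat{L}_{t,\widehat{k}}(f_2)$ and $\widehat{r}_{\widehat{k}} = 2$ otherwise, so $\widehat{r}_{\widehat{k}} \in \argmin_{r\in[2]} \widehat{L}_{t,\widehat{k}}(f_r)$; that is, $\widehat{r}_{\widehat{k}}$ is the index $\widehat{r}_k$ of \Cref{lem-from-assess-to-compare} with the window taken to be $k = \widehat{k}$. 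Applying \Cref{lem-from-assess-to-compare} at $k = \widehat{k}$ gives
\[
L_t(\widehat{f}) - \min_{r\in[2]} L_t(f_r) \le \bigl|\widehat{\Delta}_{t,\widehat{k}} - \Delta_t\bigr|.
\]

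It then remains to bound the right-hand side via \Cref{thm-assess-oracle} applied to the derived instance (with the same $\delta$ and $\delta' = \delta/(3t)$): with probability at least $1-\delta$,
\[
\bigl|\widehat{\Delta}_{t,\widehat{k}} - \Delta_t\bigr| \lesssim \min_{k\in[t]}\bigl\{\phi(t,k) + \psi(t,k,\delta)\bigr\},
\]
where, under the identification above, $\phi(t,k) = \max_{t-k+1\le j\le t}|\Delta_j - \Delta_t|$ and $\psi(t,k,\delta)$ is the quantity of \Cref{cor-Hoeffding-Bernstein} with $\sd_{t,k}$ replaced by $\widetilde{\sigma}_{t,k}$ and $M = 2(b-a)$. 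Finally, for $B_{t,k}\ge 2$ one has $\psi(t,k,\delta) = \widetilde{\sigma}_{t,k}\sqrt{2\log(2/\delta)/B_{t,k}} + \tfrac{4(b-a)}{3}\log(2/\delta)/B_{t,k} \lesssim \widetilde{\sigma}_{t,k}/\sqrt{B_{t,k}} + 1/B_{t,k}$, and for $B_{t,k} = 1$, $\psi(t,k,\delta) = 2(b-a) \lesssim 1/B_{t,k}$; absorbing the constants and the $\log(t/\delta)$ factors into $\lesssim$ and chaining the three displays yields the stated bound.

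The reduction is essentially mechanical, so the work is all bookkeeping: checking that the differenced loss ranges over $[a-b,b-a]$ (so the boundedness constant, and hence only the $1/B_{t,k}$ term, changes by a factor of $2$), verifying that the sign-based tie-breaking in \Cref{alg-compare} really makes $\widehat{r}_{\widehat{k}}$ an empirical risk minimizer so that \Cref{lem-from-assess-to-compare} is applicable, and carrying the $1-\delta$ probability through the union bound already internal to \Cref{thm-assess-oracle}. A version with explicit logarithmic factors would follow by invoking the sharper form of \Cref{thm-assess-oracle} from \Cref{sec-thm-assess-oracle-proof} instead.
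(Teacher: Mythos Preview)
Your proposal is correct and follows exactly the paper's approach: the paper simply states that \Cref{thm-assess-oracle} and \Cref{lem-from-assess-to-compare} ``directly yield'' \Cref{thm-compare-oracle-slow}, and you have carefully spelled out this reduction, including the bookkeeping on the range of the differenced losses and the verification that the sign rule produces an empirical minimizer.
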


Consider again \Cref{eg-local-stationarity} where $\distP_{t-K} \neq \distP_{t-K+1}=\cdots=\distP_t$ for some $K$. \Cref{thm-compare-oracle-slow} admits a similar interpretation as \Cref{thm-assess-oracle}: \Cref{alg-compare} selects $\widehat{f}$ satisfying
\begin{equation}
L_t(\widehat{f}) - \min_{r\in[2]}L_t(f_r)
\lesssim
\frac{\widetilde{\sigma}_t}{\sqrt{B_{t,K}}} + \frac{1}{B_{t,K}}
\lesssim \frac{1}{\sqrt{B_{t,K}}} 
,
\label{eqn-compare-slow-rate-eg}
\end{equation}
where $\widetilde{\sigma}_t^2 = \var_{z\sim\distP_t}\left[\ell(f_1,z) - \ell(f_2,z)\right]$. 

In the setting of bounded regression without covariate shift, we may further improve the rate in \eqref{eqn-compare-slow-rate-eg}. To state the results, we let $\cX$ be a feature space and consider the following assumptions.

\begin{assumption}[Bounded response]\label{assumption-bounded-regression}
For $j\in\ZZ_+$, a sample $z\sim\distP_j$ takes the form $z=(x,y)$, where $x \in \cX$ is the covariate and $y \in \RR$ is the response. There exists $M_0>0$ such that $|y|\le M_0$ holds for $(x,y)\sim\distP_j$ and $j\in\ZZ_+$.
\end{assumption}

\begin{assumption}[Bounded models]\label{assumption-bounded-functions}
The loss $\loss$ is given by $\loss(f,(x,y)) = [f(x)-y]^2$. For all $x \in \cX$ and $f\in\functionclass$, $|f(x)|\le M_0$.
\end{assumption}

\begin{assumption}[No covariate shift]\label{assumption-no-covariate-shift}
The distributions $\{ \distP_j \}_{j=1}^{\infty}$ have the same marginal distribution of the covariate, denoted by $\distP$.
\end{assumption}

Assumptions \ref{assumption-bounded-regression} and \ref{assumption-bounded-functions} imply Assumption \ref{assumption-loss-bounded} with $a=0$ and $b=4M_0^2$. In Assumption \ref{assumption-no-covariate-shift}, the distribution of the covariate $x_{j,1}$ is the same for all $j\in\ZZ_+$, while the conditional distribution of $y_{j,1}$ given $x_{j,1}$ may experience shifts. The latter is commonly known as \emph{concept drift} \citep{GZI14}. 

Before we state the result, we introduce a few notations. Define $f_j^*(\cdot) = \EE ( y_{j, 1} | x_{j, 1} = \cdot )$, which minimizes the mean square error $\EE [f(x)-y]^2$ over the class of all measurable $f:~\cX\to \RR$. For $h_1,h_2\in\functionclass$, define an inner product
\[
\langle h_1, h_2 \rangle_{\distP} = \EE_{x \sim \distP} \left[ h_1(x) h_2(x) \right],
\]
which induces a norm $\| h \|_{\distP} = \sqrt{ \langle h, h \rangle_{\distP}}$. It can be readily checked that $L_t(f)-L_t(f_t^*) = \|f-f_t^*\|_{\distP}^2$ for all $f\in\functionclass$. Thus, we may measure the performance of a model $f\in\functionclass$ by
\[
\|f-f_t^*\|_{\distP} = \sqrt{L_t(f) - L_t(f_t^*)},
\] 
as it admits the interpretation of being both the distance between $f$ and $f_t^*$ under $\|\cdot\|_{\distP}$, and the square root of the excess risk $L_t(f) - L_t(f_t^*)$. For $j\neq t$, the quantity $\|f_j^*-f_t^*\|_{\distP}$ serves as a measure of distribution shift between time $j$ and time $t$.

We are now ready to state our result. In \Cref{sec-thm-compare-oracle-proof} we provide a more precise bound and its proof. 

\begin{theorem}[Fast rate]\label{thm-compare-oracle}
Let Assumptions \ref{assumption-bounded-regression}, \ref{assumption-bounded-functions} and \ref{assumption-no-covariate-shift} hold. Let $M_0$ be a constant. Choose $\delta\in(0,1)$ and take $\delta' = \delta / (3t)$ in \Cref{alg-compare}. With probability at least $1-\delta$, \Cref{alg-compare} outputs $\widehat{f}$ satisfying
\[
\|\widehat{f}-f_t^*\|_{\distP} - \min_{r\in[2]}\|f_r-f_t^*\|_{\distP} \\
\lesssim 
\min_{k\in[t]}\left\{\max_{t-k+1\le j\le t}\|f_j^*-f_t^*\|_{\distP} + \frac{1}{ \sqrt{B_{t,k}} }\right\}.
\]
Here $\lesssim$ only hides a logarithmic factor of $t$ and $\delta^{-1}$.
\end{theorem}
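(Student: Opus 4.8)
The plan is to reduce Theorem~\ref{thm-compare-oracle} to Theorem~\ref{thm-compare-oracle-slow} (or rather, to the sharper oracle inequality for mean estimation behind it) by exploiting the special structure of the squared loss under no covariate shift. The key algebraic identity is that for any $f,g\in\functionclass$ and any $j$,
\[
L_j(f) - L_j(g) = \|f-f_j^*\|_{\distP}^2 - \|g-f_j^*\|_{\distP}^2 = \langle f-g,\, f+g - 2f_j^*\rangle_{\distP}.
\]
Applying this with $f=f_1$, $g=f_2$ gives $\Delta_j = \langle f_1-f_2,\, f_1+f_2-2f_j^*\rangle_{\distP}$, so that
\[
\Delta_j - \Delta_t = -2\langle f_1-f_2,\, f_j^* - f_t^*\rangle_{\distP},
\qquad
|\Delta_j - \Delta_t| \le 2\|f_1-f_2\|_{\distP}\,\|f_j^*-f_t^*\|_{\distP}.
\]
Thus the bias term $\phi(t,k)=\max_{t-k+1\le j\le t}|\Delta_j-\Delta_t|$ is controlled by $2\|f_1-f_2\|_{\distP}\max_{j}\|f_j^*-f_t^*\|_{\distP}$. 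Similarly, the variance proxy should be controlled: since $\ell(f_1,z)-\ell(f_2,z) = [f_1(x)-y]^2-[f_2(x)-y]^2 = [f_1(x)-f_2(x)]\,[f_1(x)+f_2(x)-2y]$ and $|f_1(x)+f_2(x)-2y|\le 4M_0$ is bounded, we get $\widetilde\sigma_{t,k}^2 \le 16M_0^2\,\EE_{x\sim\distP}[(f_1(x)-f_2(x))^2] = 16M_0^2\|f_1-f_2\|_{\distP}^2$, i.e.\ $\widetilde\sigma_{t,k}\lesssim \|f_1-f_2\|_{\distP}$.

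Next I would feed these two bounds into Theorem~\ref{thm-compare-oracle-slow}. Writing $D = \|f_1-f_2\|_{\distP}$ and $\epsilon_j = \|f_j^*-f_t^*\|_{\distP}$, that theorem gives, with probability $1-\delta$,
\[
L_t(\widehat f) - \min_{r\in[2]}L_t(f_r)
\;\lesssim\;
\min_{k\in[t]}\Big\{ D\max_{t-k+1\le j\le t}\epsilon_j + \frac{D}{\sqrt{B_{t,k}}} + \frac{1}{B_{t,k}}\Big\}
\;\lesssim\;
D\cdot\min_{k\in[t]}\Big\{\max_{t-k+1\le j\le t}\epsilon_j + \frac{1}{\sqrt{B_{t,k}}}\Big\} + \frac{1}{B_{t,t}},
\]
using $D\le 2M_0\lesssim 1$ to absorb constants and (for the last term) that $1/B_{t,k}\le 1/\sqrt{B_{t,k}}$ once $B_{t,k}\ge1$; actually it is cleaner to keep $1/\sqrt{B_{t,k}}$ and note $1/B_{t,k}\le 1/\sqrt{B_{t,k}}$ throughout, so the whole minimand is $\lesssim D\max_j\epsilon_j + 1/\sqrt{B_{t,k}}$ up to the constant $D\lesssim 1$. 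Denote the resulting right-hand side by $E$; so $L_t(\widehat f)-\min_r L_t(f_r)\lesssim E$ where $E = \min_{k}\{D\max_j\epsilon_j + 1/\sqrt{B_{t,k}}\}$.

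The last and most delicate step is converting the excess-risk bound $L_t(\widehat f) - \min_r L_t(f_r)\lesssim E$ into the claimed \emph{square-root}-scale bound $\|\widehat f - f_t^*\|_{\distP} - \min_r\|f_r - f_t^*\|_{\distP}\lesssim \sqrt{\text{stuff}}$... wait — in fact the claim is at the \emph{same} scale as the minimand (no extra square root on the whole thing), the ``fast rate'' being that the statistical term is $1/\sqrt{B_{t,k}}$ rather than $1/\sqrt[4]{B_{t,k}}$. Let $r^\star\in\argmin_r\|f_r-f_t^*\|_{\distP}$ and $a = \|f_{r^\star}-f_t^*\|_{\distP}$, $\widehat a = \|\widehat f - f_t^*\|_{\distP}$. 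Then $\widehat a^2 - a^2 = (L_t(\widehat f)-L_t(f_{r^\star}))\lesssim E$, so $(\widehat a - a)(\widehat a + a)\lesssim E$. If $\widehat a\le 2a$ this gives nothing directly small unless $a$ is large; the point is instead to split cases on the size of $D=\|f_1-f_2\|_{\distP}$ relative to $E$. When $D\lesssim \sqrt{E}$: then both models are within $O(\sqrt E)$ of each other, and since $\widehat f$ is one of them while $a=\min_r\|f_r-f_t^*\|$, the triangle inequality gives $\widehat a - a \le \|f_1-f_2\|_{\distP} = D \lesssim \sqrt E$, and one checks $\sqrt E \lesssim$ RHS. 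When $D\gtrsim\sqrt E$: since $E\lesssim D\max_j\epsilon_j + 1/\sqrt{B_{t,k}}$ optimized over $k$, we have $E/D \lesssim \min_k\{\max_j\epsilon_j + 1/(D\sqrt{B_{t,k}})\}\lesssim \min_k\{\max_j\epsilon_j + 1/\sqrt{B_{t,k}}\}$ again using $1/D\lesssim 1$... this needs $D$ bounded \emph{below}, which is exactly the case. Then $\widehat a - a \le E/(\widehat a + a)$; if $\widehat a + a \gtrsim D$ we're done since $E/D\lesssim$ RHS, and if $\widehat a + a \ll D$ then both $f_1,f_2$ are within $\ll D$ of $f_t^*$, contradicting $\|f_1-f_2\|_{\distP}=D$ by the triangle inequality. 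Assembling the two cases yields the stated bound.

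\textbf{Main obstacle.} The routine parts (the algebraic identities, plugging into Theorem~\ref{thm-compare-oracle-slow}) are straightforward; the crux is the last paragraph — turning a bound on the \emph{squared} excess risk into one on the metric excess $\widehat a - a$ at the right rate. The subtlety is that $\widehat a^2 - a^2$ small does not imply $\widehat a - a$ small when $\widehat a + a$ can be small, so one must leverage that $\widehat f\in\{f_1,f_2\}$ together with a case analysis on $\|f_1-f_2\|_{\distP}$ versus the error level, using the triangle inequality to rule out the bad regime. Getting the constants and the $k$-dependence to line up cleanly across both cases — in particular ensuring the $\min_k$ survives intact — is where the care is needed.
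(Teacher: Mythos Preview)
Your reduction to the mean-estimation oracle inequality and the two structural bounds
\[
|\Delta_j-\Delta_t|\le 2\|f_1-f_2\|_{\distP}\,\|f_j^*-f_t^*\|_{\distP},
\qquad
\widetilde\sigma_{t,k}\lesssim M_0\,\|f_1-f_2\|_{\distP}
\]
are exactly what the paper uses. The gap is in your final step, and it is a real one. Two issues compound:

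\emph{(i) The coarsening is lossy in the wrong place.} You replace $D/\sqrt{B_{t,k}}+1/B_{t,k}$ by $(D+1)/\sqrt{B_{t,k}}\lesssim 1/\sqrt{B_{t,k}}$ using $D\lesssim 1$. This throws away the factor of $D$ in front of the $1/\sqrt{B_{t,k}}$ term, and that factor is precisely what later lets you divide by $D$ and still land on $1/\sqrt{B_{t,k}}$. With your coarsened $E=\min_k\{D\max_j\epsilon_j+1/\sqrt{B_{t,k}}\}$, neither case closes: in Case~1 you need $\sqrt{E}\lesssim\mathrm{RHS}$, but $\sqrt{E}\ge 1/B_{t,k}^{1/4}\not\lesssim 1/\sqrt{B_{t,k}}$; in Case~2 you need $E/D\lesssim\mathrm{RHS}$, i.e.\ $1/(D\sqrt{B_{t,k}})\lesssim 1/\sqrt{B_{t,k}}$, which requires $D\gtrsim 1$. \emph{(ii) ``$D$ bounded below, which is exactly the case'' is false.} The hypothesis of Case~2 is $D\gtrsim\sqrt{E}$, and $E$ can be arbitrarily small (take a stationary environment with large $B_{t,t}$ and $D=B_{t,t}^{-1/8}$: then $E=1/\sqrt{B_{t,t}}$, you are in Case~2, yet $E/D=B_{t,t}^{-3/8}\gg B_{t,t}^{-1/2}=\mathrm{RHS}$).

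The paper sidesteps the whole case analysis. It keeps the sharp form
\[
\widehat a^2-a^2\;\lesssim\;\Phi(t,k)\,\bigl(\|f_1-f_t^*\|_{\distP}+\|f_2-f_t^*\|_{\distP}\bigr)+\Psi(t,k),
\qquad \Psi(t,k)=\tfrac{M_0^2\log(6t/\delta)}{B_{t,k}},
\]
obtained by bounding $D\le\|f_1-f_t^*\|_{\distP}+\|f_2-f_t^*\|_{\distP}$ via the triangle inequality \emph{before} simplifying. In the nontrivial case $\widehat f\neq f_{r^\star}$ one has $\|f_1-f_t^*\|_{\distP}+\|f_2-f_t^*\|_{\distP}=\widehat a+a$, so this is a quadratic inequality $\widehat a^2-a^2\lesssim\Phi\,(\widehat a+a)+\Psi$. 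Completing the square gives $(\widehat a-C\Phi)^2\le(a+C\Phi)^2+C\Psi$, hence $\widehat a\le a+2C\Phi+\sqrt{C\Psi}$, and $\sqrt{\Psi}\asymp M_0/\sqrt{B_{t,k}}$ is already inside $\Phi$. No cases, no lower bound on $D$ needed. If you prefer a case analysis, it does work provided you (a) keep the uncoarsened bound $\widehat a^2-a^2\lesssim D\cdot\mathrm{RHS}_k+1/B_{t,k}\le D\cdot\mathrm{RHS}_k+\mathrm{RHS}_k^2$ and (b) split on $D$ versus $\mathrm{RHS}$ rather than $\sqrt{E}$: if $D\le\mathrm{RHS}$ then $\widehat a-a\le D\le\mathrm{RHS}$ trivially; if $D>\mathrm{RHS}$ then $\widehat a^2-a^2\lesssim D\cdot\mathrm{RHS}$ and $\widehat a+a\ge D$, so $\widehat a-a\lesssim\mathrm{RHS}$.
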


The oracle inequality in \Cref{thm-compare-oracle} shares the same bias-variance structure as that of \Cref{thm-compare-oracle-slow}. By squaring both sides of the bound, we see that in \Cref{eg-local-stationarity} where $\cP_{t-K}\neq\cP_{t-K+1}=\cdots=\cP_t$, 
\begin{equation}
L_t(\widehat{f}) - L_t(f_t^*)
\lesssim
\min_{r\in[2]}L_t(f_r) - L_t(f_t^*)
+
\frac{1}{B_{t,K}}.
\label{eqn-compare-fast-rate-eg}
\end{equation}
When either $f_1$ or $f_2$ has a small error so that $\min_{r\in[2]}L_t(f_r) - L_t(f_t^*) = o( 1 / \sqrt{ B_{t, K} } )$, \eqref{eqn-compare-fast-rate-eg} provides a much sharper guarantee on $L_t(\widehat{f})$ compared to \eqref{eqn-compare-slow-rate-eg}. As the proof of \Cref{thm-compare-oracle} reveals, such an improvement relies crucially on the structure of $\var\left[\ell(f_1,z)-\ell(f_2,z)\right]$ in the Bernstein bound. In particular, it cannot be achieved by the na\"{i}ve method of applying \Cref{alg-mean} to $f_1$ and $f_2$ separately and choosing the one with a lower estimated generalization error. 
We believe that in other scenarios such as binary classification, our analysis still goes through under commonly used noise conditions in learning theory \citep{BBM05,BBL05}.

\subsection{Selection from Multiple Candidates}

We now consider the general case of selecting over $m\in\ZZ_+$ models $\{f_r\}_{r=1}^m$. We will use a straightforward single-elimination tournament procedure. In each round, we pair up the remaining models, and use \Cref{alg-compare} to perform pairwise comparison. Within each pair, the model picked by \Cref{alg-compare} advances to the next round. When there is only one model left, the procedure terminates and outputs the model. \Cref{alg-tournament} gives the detailed description.

\begin{algorithm}[t]
	\begin{algorithmic}
	\STATE {\bf Input:} Models $\{f_r\}_{r=1}^m$, data $\{ \datasetB_j \}_{j=1}^t$, hyperparameters $\delta'$ and $M$.
	\STATE Let $S = \lceil \log_2 m \rceil$ and $\functionclass_1=\{f_r:r\in[m]\}$.
	\FOR{$s = 1,\cdots, S$} 
	\STATE Split $\functionclass_s$ into disjoint pairs:
	\[
	\functionclass_s = \cG_{s,1}\cup\cdots\cup\cG_{s,m_s}\cup\cG_{s,m_s+1},
	\]
	where $|\cG_{s,i}|=2$ for $i\in[m_s]$, and $|\cG_{s,m_s+1}|\le 2$.
		\FOR{$i = 1,\cdots, m_s+1$}
		\STATE Run \Cref{alg-compare} with inputs $\cG_{s,i}$, $\{\dataset_j\}_{j=1}^t$, $\delta'$ and $M$ to obtain $\widehat{g}_{s,i}\in\cG_{s,i}$. 
		\STATE If $|\cG_{s,m_s+1}|=1$, simply take $\widehat{g}_{s,m_s+1}\in\cG_{s,m_s+1}$.
		\ENDFOR
	\STATE Let $\functionclass_{s+1} = \{\widehat{g}_{s,i}:i\in[m_s+1]\}$.
	\ENDFOR
	\STATE {\bf Output:} The only model $\widehat{f}\in\functionclass_S$.
	\caption{Single-Elimination Tournament for Model Selection}
	\label{alg-tournament}
	\end{algorithmic}
\end{algorithm}

Here $\functionclass_{s+1}$ is the set of remaining models after round $s$. By design, \Cref{alg-tournament} eliminates about half of the remaining models in each round: $|\functionclass_{s+1}| \le \lceil |\functionclass_s|/2 \rceil$. Thus, only one model remains after $\lceil\log_2m\rceil$ rounds. Since each call of \Cref{alg-compare} eliminates one model, then \Cref{alg-tournament} calls \Cref{alg-compare} exactly $m-1$ times.

We now give the theoretical guarantee of \Cref{alg-tournament} in the setting of bounded regression. We provide a more precise bound and its proof in \Cref{sec-thm-select-oracle-proof}.

\begin{theorem}[Oracle inequality]\label{thm-select-oracle}
Let Assumptions \ref{assumption-bounded-regression}, \ref{assumption-bounded-functions} and \ref{assumption-no-covariate-shift} hold. Let $M_0$ be a constant. Choose $\delta\in(0,1)$ and take $\delta' = \delta / (3m^2t)$ in \Cref{alg-tournament}. With probability at least $1-\delta$, \Cref{alg-tournament} outputs $\widehat{f}$ satisfying
\[
\|\widehat{f}-f_t^*\|_{\distP} - \min_{r\in[m]}\|f_r-f_t^*\|_{\distP} \\
\lesssim 
\min_{k\in[t]}\left\{\max_{t-k+1\le j\le t}\|f_j^*-f_t^*\|_{\distP} + \frac{1}{  \sqrt{B_{t,k}} }\right\}.
\]
Here $\lesssim$ hides a polylogarithmic factor of $m$, $t$ and $\delta^{-1}$.
\end{theorem}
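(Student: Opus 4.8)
The plan is to reduce the multi-candidate guarantee to the pairwise guarantee of \Cref{thm-compare-oracle} by tracking the error accumulated through the rounds of the tournament. First I would set up notation: let $k^*$ denote a minimizer of the right-hand side $\min_{k\in[t]}\{\max_{t-k+1\le j\le t}\|f_j^*-f_t^*\|_{\distP} + B_{t,k}^{-1/2}\}$, and write $\varepsilon$ for that minimal value (up to the logarithmic factor). Condition on the event that \emph{all} $m-1$ invocations of \Cref{alg-compare} satisfy their respective oracle inequalities simultaneously; since each call uses confidence parameter $\delta'=\delta/(3m^2t)$ and there are at most $m-1 \le m$ calls, a union bound gives overall probability at least $1-\delta$ (the $m^2$ in $\delta'$ leaves ample room, and the extra logarithmic factors in the final bound absorb $\log(m^2t)$ versus $\log t$).

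The core of the argument is a propagation-of-error step. The key structural fact is the one highlighted after \Cref{lem-from-assess-to-compare}: what \Cref{alg-compare} actually controls, per comparison of a pair $\{g,g'\}$, is the \emph{performance gap} at time $t$, namely $\|\widehat g - f_t^*\|_{\distP} \le \min(\|g-f_t^*\|_{\distP}, \|g'-f_t^*\|_{\distP}) + (\text{oracle error})$ — here I would want the precise ``more precise bound'' version from \Cref{sec-thm-compare-oracle-proof} phrased in terms of $\|\cdot - f_t^*\|_{\distP}$, so that the oracle error term is the same quantity $\varepsilon$ regardless of which pair is being compared, because $\max_{t-k+1\le j\le t}\|f_j^*-f_t^*\|_{\distP}$ and $B_{t,k}$ do not depend on the models. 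Thus in each round the ``regret'' of the surviving model relative to the better model in its pair grows additively by at most $\varepsilon$. Let $f_{r^*}$ be a best candidate, i.e. $\|f_{r^*}-f_t^*\|_{\distP} = \min_{r\in[m]}\|f_r-f_t^*\|_{\distP}$. Track the pair containing (the descendant of) $f_{r^*}$ through the rounds: after round $s$, the survivor $\widehat g$ of that bracket satisfies $\|\widehat g - f_t^*\|_{\distP} \le \|f_{r^*}-f_t^*\|_{\distP} + s\varepsilon$, by induction on $s$ using the pairwise bound and the fact that $f_{r^*}$'s descendant is always one of the two contestants (so the ``$\min$'' on the right side is at most its value). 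Since the tournament has $S=\lceil\log_2 m\rceil$ rounds, the final output $\widehat f$ — which must be the eventual survivor of every bracket, in particular $f_{r^*}$'s — obeys $\|\widehat f - f_t^*\|_{\distP} - \min_{r\in[m]}\|f_r-f_t^*\|_{\distP} \le S\varepsilon \lesssim \log_2(m)\cdot \varepsilon$, which is exactly the claimed bound with a polylogarithmic-in-$m$ factor.

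The main obstacle I anticipate is making the inductive ``regret grows by $\varepsilon$ per round'' step fully rigorous, for two reasons. First, one must be careful that \Cref{thm-compare-oracle}'s bound is genuinely uniform over all pairs that can arise — this is fine because the bias term $\max_j\|f_j^*-f_t^*\|_{\distP}$ and the variance term $B_{t,k}^{-1/2}$ are model-independent, but it does mean I should invoke the sharper version of the theorem that isolates a clean $\min_k\{\cdots\}$ expression rather than one with model-dependent variance proxies (contrast with \Cref{thm-compare-oracle-slow}, whose $\widetilde\sigma_{t,k}$ \emph{does} depend on the pair; this is precisely why the tournament guarantee is stated only under the regression assumptions). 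Second, the byes: when $|\cG_{s,m_s+1}|=1$ the lone model advances for free with zero added error, so the induction is unaffected, but I should note this explicitly. A minor point is bookkeeping on the failure probability and the logarithmic factors — choosing $\delta'=\delta/(3m^2 t)$ and union-bounding over $\le m$ comparisons gives failure probability $\le m\cdot 3 m^2 t \cdot \delta'/(3m^2 t)\cdot(\text{slack}) \le \delta$, and each appearance of $\log(1/\delta')=\log(3m^2t/\delta)\lesssim \log(mt/\delta)$ contributes only a polylog factor, which the statement's $\lesssim$ already absorbs. I would then assemble these pieces, citing \Cref{thm-compare-oracle} (precise version), into the formal proof in \Cref{sec-thm-select-oracle-proof}.
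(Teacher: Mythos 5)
Your proposal matches the paper's proof in its essential approach: reduce to the pairwise guarantee of \Cref{thm-compare-oracle} (in its sharper form), observe that the bias and variance terms $\max_{t-k+1\le j\le t}\|f_j^*-f_t^*\|_{\distP}$ and $B_{t,k}^{-1/2}$ are model-independent so the same oracle error $\varepsilon$ controls every comparison, and accumulate $\varepsilon$ per round over $S=\lceil\log_2 m\rceil$ rounds. The bookkeeping differs slightly: you follow the bracket lineage descending from a best candidate $f_{r^*}$, whereas the paper telescopes $\|g_{s+1}-f_t^*\|_{\distP}-\|g_s-f_t^*\|_{\distP}$ where $g_s$ is the best model \emph{remaining} in $\functionclass_s$; the two are equivalent here and both yield the $\log_2 m$ factor.

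The one step that is glossed over is the union bound. You condition on all $m-1$ realized invocations of \Cref{alg-compare} succeeding, as if they were $m-1$ fixed events. But the pair compared in each round $s\ge 2$ is determined by the (data-dependent) outcomes of earlier rounds, and those earlier comparisons use the \emph{same} data $\{\datasetB_j\}_{j=1}^t$, so the events ``call $i$ fails'' are not simply $m-1$ applications of the fixed-pair guarantee. The clean fix, and what the paper does, is to union bound over all $\binom{m}{2}$ \emph{possible} pairs $(f',f'')\subseteq\{f_r\}_{r=1}^m$: this is a union over fixed events, and on that event every comparison that could arise during the tournament is covered regardless of the random bracket evolution. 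This is precisely why $\delta'=\delta/(3m^2 t)$ is the natural choice — the $m^2$ is not ``ample room'' on top of a $\delta/(3mt)$ budget, it is what the union over $\cO(m^2)$ pairs requires. With that correction, your argument goes through and agrees with the paper.
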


We remark that \Cref{thm-select-oracle} takes the same form as \Cref{thm-compare-oracle} up to a factor of $\log_2m$. Thus, in \Cref{eg-local-stationarity} where $\cP_{t-K}\neq\cP_{t-K+1}=\cdots=\cP_t$, the model $\widehat{f}$ selected by \Cref{alg-tournament} also enjoys a fast rate similar to \eqref{eqn-compare-fast-rate-eg}:
\[
L_t(\widehat{f}) - L_t(f_t^*)
\lesssim
\min_{r\in[m]}L_t(f_r) - L_t(f_t^*)
+
\frac{1}{B_{t,K}}.
\]

\section{Numerical Experiments}\label{sec-experiments}

We conduct simulations to verify our theories and test our algorithms on three real datasets. We will focus on \Cref{alg-tournament} for Problem \ref{problem-selection}. For simplicity, the hyperparameters $\delta'$ and $M$ in our algorithms are set to be $0.1$ and $0$ throughout our numerical study. For notational convenience, we denote \Cref{alg-tournament} by $\algva_{\rm ARW}$. We compare $\algva_{\rm ARW}$ with fixed-window model selection procedures, given by \Cref{alg-fixed} and denoted by $\algva_k$, where $k$ is the fixed window size and takes values in a set $I=\{1,4,16,64,256\}$. Our code is available at \url{https://github.com/eliselyhan/ARW}.

\begin{algorithm}[t]
	\begin{algorithmic}
		\STATE {\bf Input:} Models $\{f_r\}_{r=1}^m$, data $\{ \datasetB_j \}_{j=1}^t$, window size $k$.
		\STATE Compute $s=\max\{t,k\}$.
		\FOR{$r = 1,\cdots, m$}
		\STATE $\widehat{L}_{t,k}(f_r) = \frac{1}{B_{t,s}}\sum_{j=t-s+1}^t\sum_{i=1}^{B_j}\loss(f_r,z_{j,i})$.
		\ENDFOR
		\STATE Compute $\widehat{r} \in\argmin_{r\in[m]}\widehat{L}_{t,k}(f_r)$.
		\STATE {\bf Output:} $\widehat{f}_k = f_{\widehat{r}}$.
		\caption{Fixed-Window Model Selection Algorithm} 
		\label{alg-fixed}
	\end{algorithmic}
\end{algorithm}

\subsection{Synthetic Data}
We first train different models for a mean estimation task on synthetic data. Then, we deploy \Cref{alg-tournament,alg-fixed} to select top-performing models. Finally, we assess and compare their qualities. 
Throughout the simulations, we consider 100 time periods. At each time period $t$, we generate a batch of $B_t\in\ZZ_+$ i.i.d.~samples $\datasetB_t = \{ z_{t, i} \}_{i=1}^{B_t}$. In our setup, each sample $z_{t, i} \sim \mathcal N(\mu_t, \sigma^2)$, where $\mu_t$ is the given population mean of period $t$. The task of the models is to estimate $\mu_t$.

At each period $t$, we split $\datasetB_t$ into a training set $\datasetB_t^{\rm tr}$ and a validation set $\datasetB_t^{\rm va}$. The models are trained on the training set, and subsequent model selection is done using the validation set. The size of the validation set $B_t^{\rm va}$ is sampled uniformly from $\{2,3,4\}$. The size of the training set is $B_t^{\rm tr} = 3B_t^{\rm va}$. 

In each period $t$, we consider $5$ estimates $\{\algtr(t,w)\}_{w\in I}$ of the target $\mu_t$, where $\algtr(t,w)$ is the sample average of the data $\{\datasetB_j^{\rm tr}\}_{j=t-w+1}^t$. We then select a model based on data $\{\datasetB_j^{\rm va}\}_{j=1}^t$. To measure its quality, we compute the excess risk, which is the squared difference between the true mean $\mu_t$ and the selected estimate. We compare $\algva_{\rm ARW}$ with the fixed-window benchmarks $\{\algva_k\}_{k\in I}$. We use the following scenarios as testbeds.

\begin{figure}[h]
	\centering
    \begin{subfigure}[\Cref{eg-syn-1}]{
	\label{fig-horizontal-true-means}
    \includegraphics[scale=0.6]{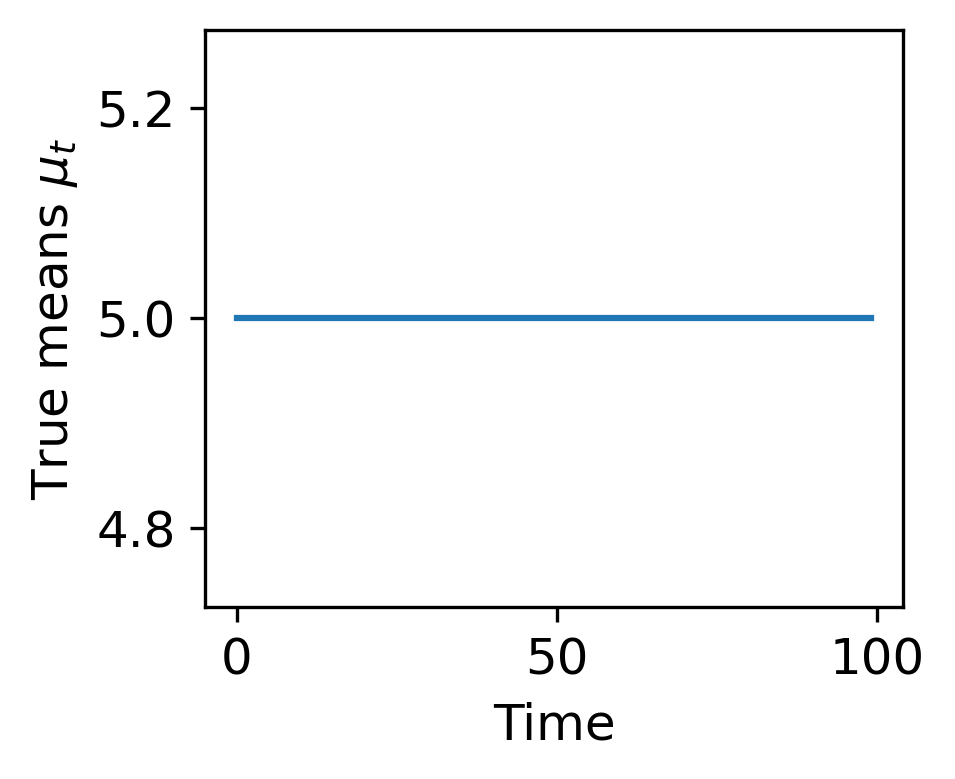}
    }
    \end{subfigure}
    \quad
	\begin{subfigure}[\Cref{eg-syn-2}]{
	\label{fig-combination-true-means}
    \includegraphics[scale=0.6]{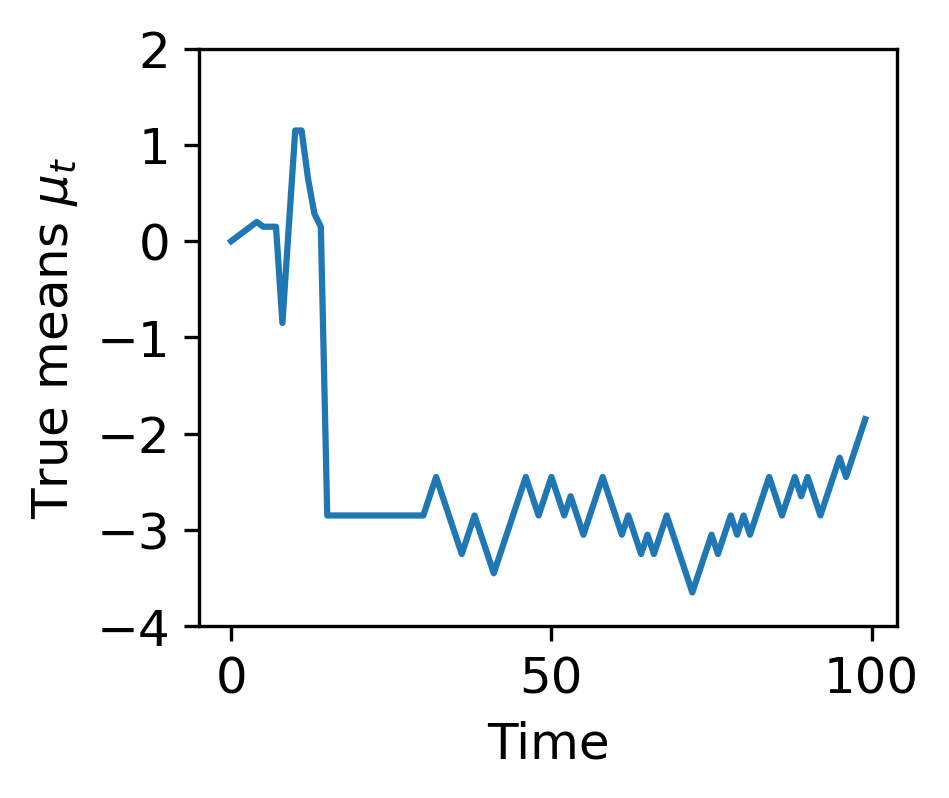}
    }
    \end{subfigure}
    \caption{True means $\{ \mu_t \}_{t=0}^{100}$ in the synthetic data.}
    \label{fig:1}
\end{figure}

\begin{example}\label{eg-syn-1} 
\Cref{fig-horizontal-true-means} illustrates the stationary case where the true mean stays constant. We consider both low-variance and high-variance regimes with $\sigma^2=1$ and $10$, respectively. \Cref{table-eg1} records the average excess risks over $100$ periods and $20$ independent trials for $\algva_{\rm ARW}$ and $\{ \algva_k \}_{k \in I }$. The first and second rows correspond to $\sigma^2=1$ and $\sigma^2=10$. In both regimes, $\algva_{\rm ARW}$ leverages the underlying stationarity and yields excess risks comparable to $\algva_k$ with large $k$'s, whereas $\algva_k$ with smaller $k$'s perform worse. 
In \Cref{fig-eg1}, we plot the average excess risks over $20$ trials at each time $t$ for $\algva_{\rm ARW}$, $\algva_1$ and $\algva_{256}$.
\end{example}

\begin{table}[h]
	\caption{Mean excess risks of different model selection methods in \Cref{eg-syn-1}.}
	\label{table-eg1}
	\vskip 0.15in
	\begin{center}
		\begin{small}
			\begin{sc}
				\begin{tabular}{lcccccr}
					\toprule
					 $\algva_{\rm ARW}$ & $\algva_1$ & $\algva_4$ & $\algva_{16}$ & $\algva_{64}$ & $\algva_{256}$ \\
					\midrule
					 0.015 &  0.043   & 0.025 & 0.013 & 0.010  & 0.010  \\
				  1.293 & 4.117 & 2.572 & 1.396 & 1.015 & 0.982  \\
					\bottomrule
				\end{tabular}
			\end{sc}
		\end{small}
	\end{center}
	\vskip -0.1in
\end{table}

\begin{figure}[h]
    \begin{subfigure}{
    \includegraphics[scale=0.5]{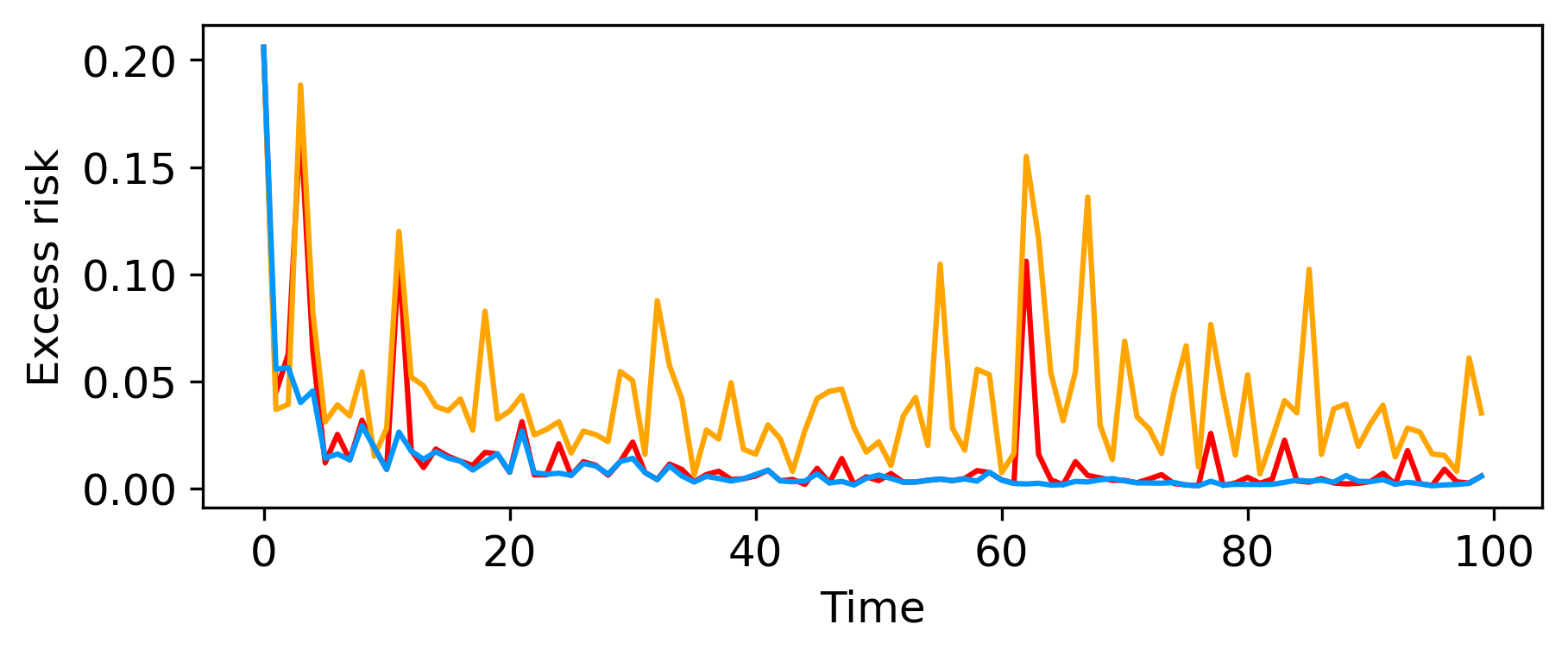}
    }
    \end{subfigure}
	\begin{subfigure}{
    \includegraphics[scale=0.5]{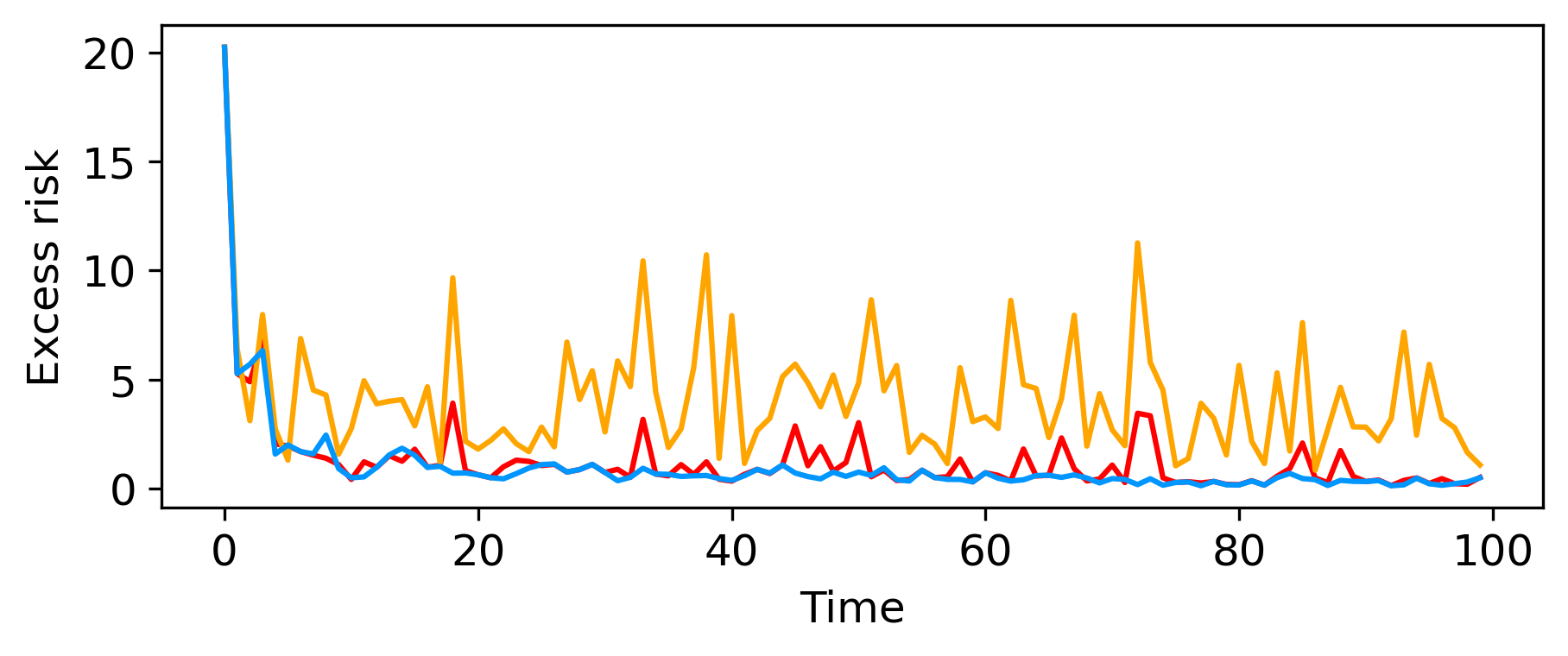}
    }
    \end{subfigure}
    \caption{Excess risks of different model selection methods in \Cref{eg-syn-1}. Left: $\sigma^2 = 1$. Right: $\sigma^2 = 10$. Red: $\algva_{\rm ARW}$. Orange: $\algva_1$. Blue: $\algva_{256}$.}
    \label{fig-eg1}
\end{figure}

\begin{example}\label{eg-syn-2}
We carry out the same experiments in a scenario with sufficient non-stationarity. The true means $\{\mu_t\}_{t=1}^{100}$ in \Cref{fig-combination-true-means} are generated using a randomized mechanism; see \Cref{sec-curve} for the details. Similar to that of \Cref{eg-syn-1}, \Cref{table-eg2} summarizes the mean excess risks. When $\sigma^2=1$, the non-stationarity pattern of the underlying means is largely preserved. Our algorithm $\algva_{\rm ARW}$ outperforms all $\algva_k$'s, demonstrating its adaptivity. When $\sigma^2=10$, the larger noise makes the non-stationarity less significant, so $\algva_k$ with larger $k$'s are be more stable. Nevertheless, $\algva_{\rm ARW}$ is still competitive with them. We also plot the average excess risks over $20$ trials at each time $t$ in \Cref{fig-eg2}. 
\end{example}

\begin{table}[h]
	\caption{Mean excess risks of different model selection methods in \Cref{eg-syn-2}.}
	\label{table-eg2}
	\vskip 0.15in
	\begin{center}
		\begin{small}
			\begin{sc}
				\begin{tabular}{lcccccr}
					\toprule
					 $\algva_{\rm ARW}$ & $\algva_1$ & $\algva_4$ & $\algva_{16}$ & $\algva_{64}$ & $\algva_{256}$ \\
					\midrule
					 0.139 &  0.157   & 0.171 & 0.539 & 1.034  & 1.067  \\
				  2.052 & 4.425 & 2.934 & 1.920 & 1.771 & 1.784  \\
					\bottomrule
				\end{tabular}
			\end{sc}
		\end{small}
	\end{center}
	\vskip -0.1in
\end{table}

\begin{figure}[h]
    \begin{subfigure}{
    \includegraphics[scale=0.5]{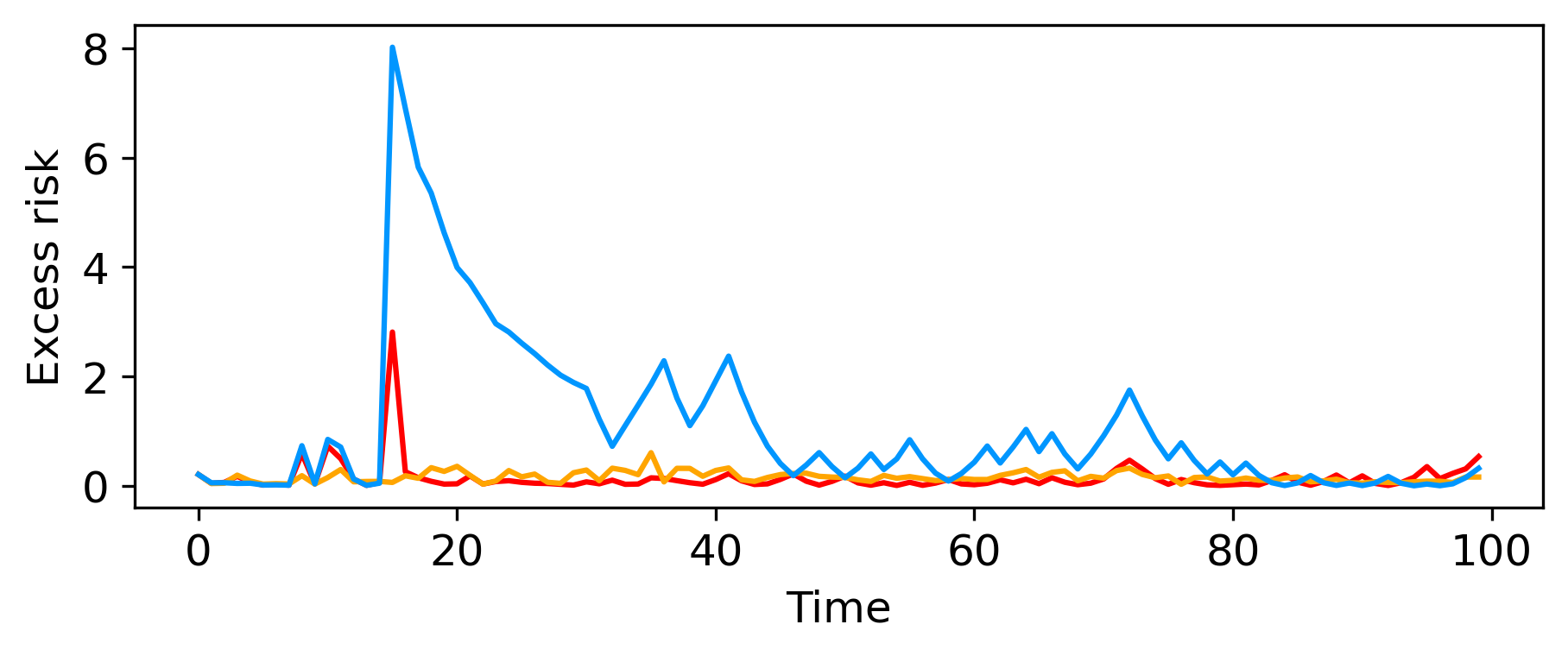}
    }
    \end{subfigure}
	\begin{subfigure}{
    \includegraphics[scale=0.5]{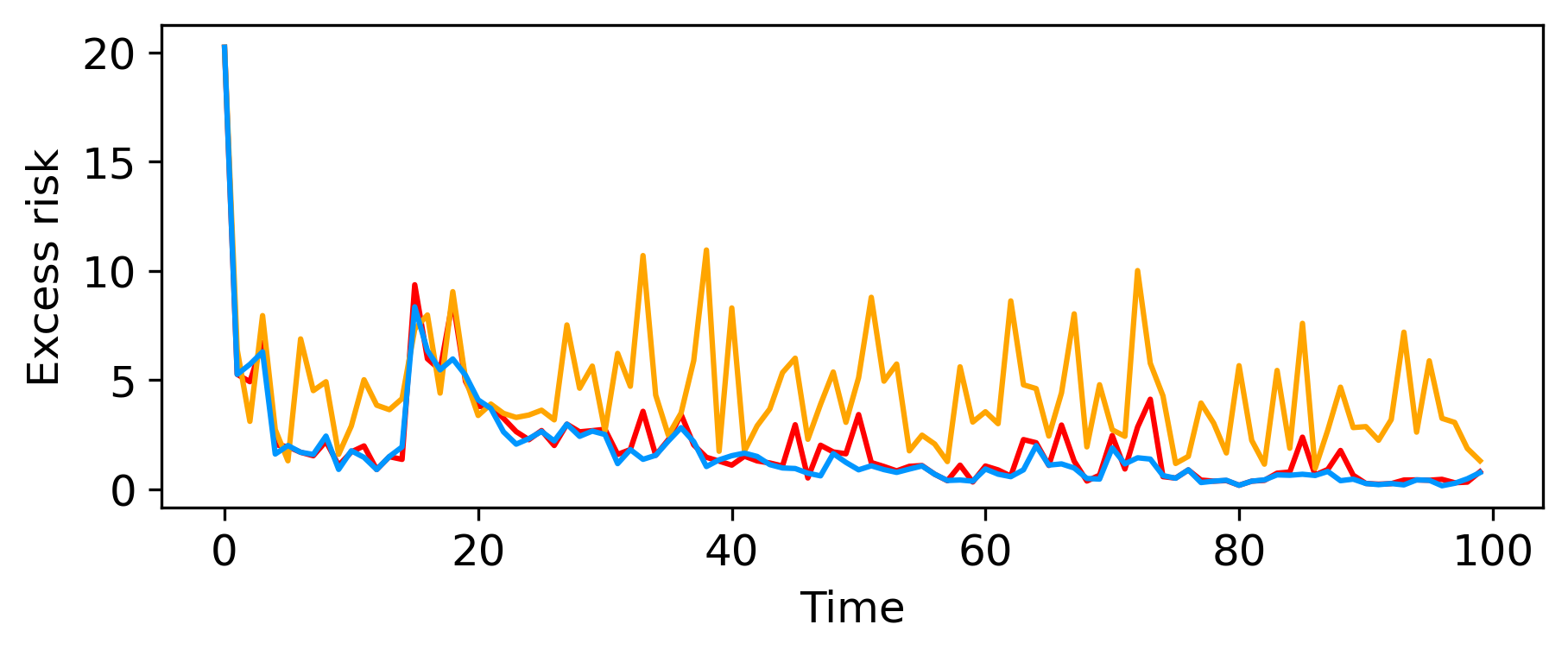}
    }
    \end{subfigure}
    \caption{Excess risks of different model selection methods in \Cref{eg-syn-2}. Left: $\sigma^2 = 1$. Right: $\sigma^2 = 10$. Red: $\algva_{\rm ARW}$. Orange: $\algva_1$. Blue: $\algva_{256}$.}
    \label{fig-eg2}
\end{figure}

\subsection{Real Data: Topic Frequency Estimation}

The first real dataset we use is the arXiv dataset\footnote{\url{https://www.kaggle.com/datasets/Cornell-University/arxiv}}. It consists of basic information of papers submitted to \texttt{arXiv.org}, such as title, abstract and categories. We study the topics of the papers in the category $\texttt{cs.LG}$ from Match $1$st, $2020$ to December $31$st, $2023$. There are $118,883$ papers in total, and we would like to estimate the proportion of deep learning papers in this category. Each week is a time period, and there are $200$ weeks in total. We regard a paper as a deep learning paper if its abstract contains at least one of the words ``deep'', ``neural'', ``dnn'' and ``dnns''. 
The data in each period is randomly split into training, validation and test sets. The training set $\datasetB_j^{\rm tr}$ has $15$ samples, the validation set $\datasetB_j^{\rm va}$ has $5$ samples and the rest of the samples $\datasetB_j^{\rm te}$ are used for testing. Typically $|\datasetB_j^{\rm te}|\in[300,1200]$. In \Cref{fig-arxiv-true-means}, we plot the frequencies over $200$ weeks estimated from $\{\datasetB_j^{\rm te}\}_{j=1}^{200}$, which exhibits a slowly drifting pattern.

In each period $t$, we consider $5$ models $\{\algtr(t,w)\}_{w\in I}$, where $\algtr(t,w)$ computes the average frequency from the data $\{\datasetB_j^{\rm tr}\}_{j=t-w+1}^t$. The selection and evaluation procedures are similar to those in the synthetic data experiment.
We compare $\algva_{\rm ARW}$ with the fixed-window benchmarks $\{\algva_k\}_{k\in I}$. The average excess risks over 200 weeks and 20 independent runs are listed in \Cref{table-arxiv}. In \Cref{fig-arxiv-line-few}, we also plot the excess risks in every period. We observe that the performance of $\algva_{\rm ARW}$ is comparable to that of the large-window benchmark $\algva_{256}$, while the small-window model selection algorithm $\algva_1$ performs poorly.
\begin{table}[h]
	\caption{Mean excess risks ($\times 10^{-3}$) of different model selection methods on the arXiv data.}
	\label{table-arxiv}
	\vskip 0.15in
	\begin{center}
		\begin{small}
			\begin{sc}
				\begin{tabular}{lcccccr}
					\toprule
					 $\algva_{\rm ARW}$ & $\algva_1$ & $\algva_4$ & $\algva_{16}$ & $\algva_{64}$ & $\algva_{256}$ \\
					\midrule
				  2.4 & 6.7 & 4.5 & 2.4 & 1.7 & 1.9  \\
					\bottomrule
				\end{tabular}
			\end{sc}
		\end{small}
	\end{center}
	\vskip -0.1in
\end{table}

\begin{figure}[h]
	\centering
    \includegraphics[scale=0.55]{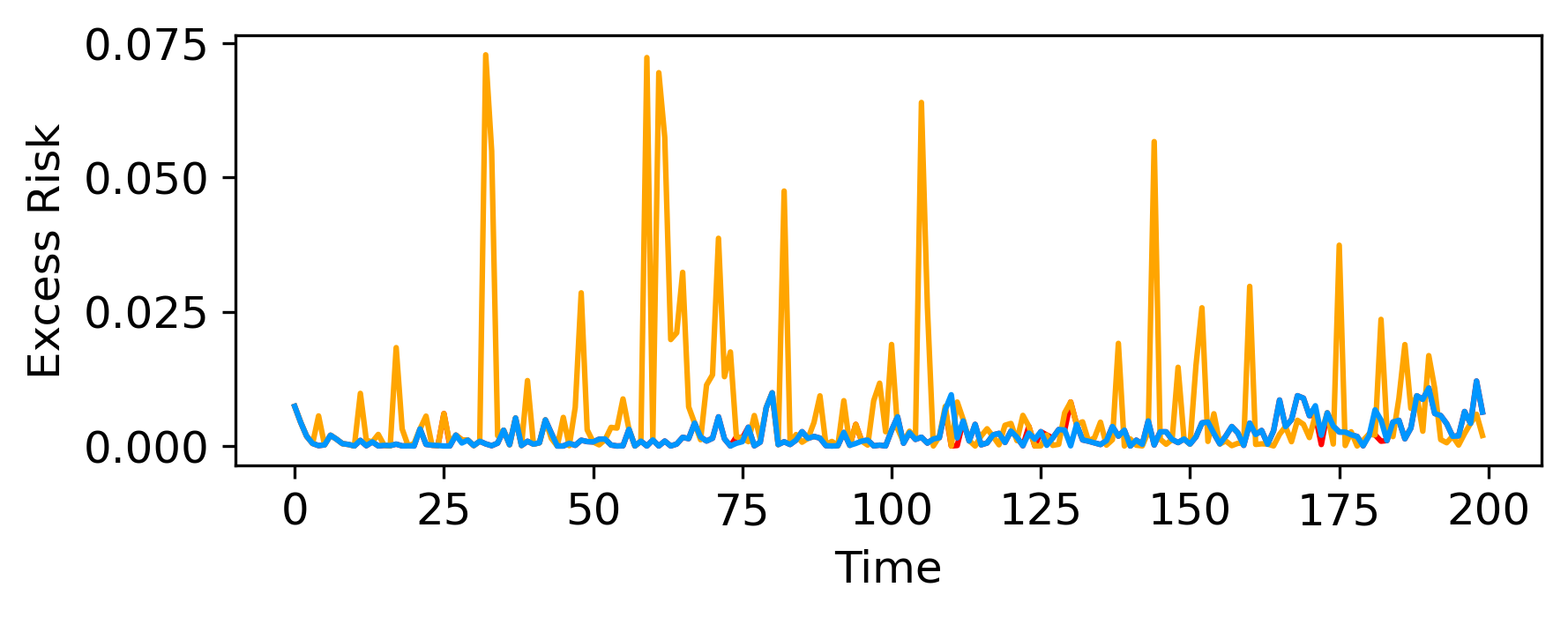}
    \caption{Error curves of different model selection methods on the arXiv data. Red: $\algva_{\rm ARW}$. Orange: $\algva_1$. Blue: $\algva_{256}$.}
    \label{fig-arxiv-line-few}
\end{figure}

\subsection{Real Data: House Price Prediction}

Finally, we test our method using a real-world dataset maintained by the Dubai Land Department\footnote{\url{https://www.dubaipulse.gov.ae/data/dld-transactions/dld_transactions-open}}. 
We study sales of apartments during the past 16 years (from January 1st, 2008 to December 31st, 2023). 
There are 211,432 samples (sales) in total, and we want to predict the final price given characteristics of an apartment (e.g.,~number of rooms,~size,~location). Each month is treated as a time period, and there are 192 of them in total. Our goal is to build a prediction model for each period using historical data. In \Cref{fig-housing-true-means}, we plot the monthly average prices. Compared with the arXiv dataset, the distribution shift in this case is more abrupt.

The data in each period is randomly split into training, validation and test sets with proportions 60\%, 20\% and 20\%, respectively. We follow the standard practice to apply a logarithmic transform to the price and target that in our prediction. See \Cref{sec-housing-details} for details of preprocessing.

\begin{table}[h]
	\caption{Overall MSE of different model selection methods on the housing data.}
	\label{table:housing}
	\vskip 0.15in
	\begin{center}
		\begin{small}
			\begin{sc}
				\begin{tabular}{lcccccr}
					\toprule
					$\algva_{\rm ARW}$ & $\algva_1$ & $\algva_4$ & $\algva_{16}$ & $\algva_{64}$ & $\algva_{256}$ \\
					\midrule
					0.071 &  0.071   & 0.069 & 0.071 & 0.091  & 0.095  \\
					\bottomrule
				\end{tabular}
			\end{sc}
		\end{small}
	\end{center}
	\vskip -0.1in
\end{table}

For the $t$-th period, we use each of the 5 training windows $w \in I$ to select training data $\{ \datasetB_j^{\rm tr} \}_{j=t-w+1}^{t}$ for regression with 2 algorithms: random forest \citep{Bre01} and XGBoost \citep{CGu16}. This results in 10 candidate models. One of them is selected using validation data and finally evaluated on the test data $\datasetB_t^{\rm te}$ by the mean squared error (MSE). For each model selection method, we compute the average MSE over all of the 192 time periods and 20 independent runs. We compare our proposed approach with fixed-window benchmarks $\{ \algva_k \}_{k \in I}$. The mean values are reported in Table \ref{table:housing}. In addition, we also plot the test MSEs of our method, $\algva_1$ and $\algva_{256}$ in all individual time periods, see \Cref{fig-housing}. Due to the strong non-stationarity, model selection based on large windows does not work well. Our method still nicely adapts to the changes.

\begin{figure}[h]
	\vskip 0.2in
	\begin{center}
		\includegraphics[scale=0.55]{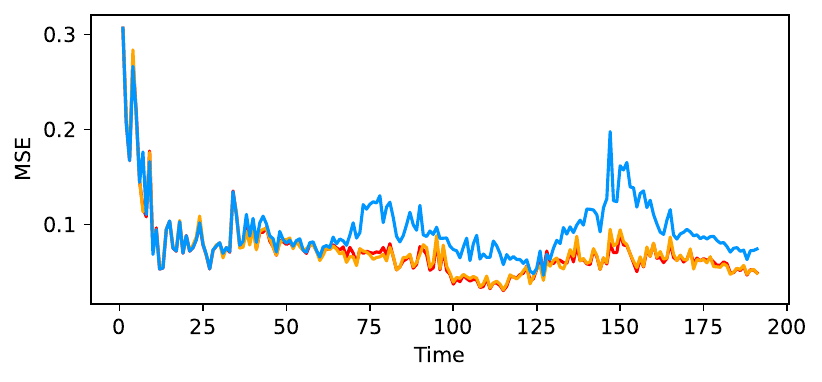}
		\caption{Error curves of different model selection methods on the housing data. Red: $\algva_{\rm ARW}$. Orange: $\algva_1$. Blue: $\algva_{256}$.}
		\label{fig-housing}
	\end{center}
	\vskip -0.2in
\end{figure}

\subsection{Summary of Experiments}

The synthetic and real data experiments above show that for different patterns of non-stationarity, the best window can be different. It is large for stationary environments, medium for the arXiv data, and small for the Dubai housing data. In practice, the non-stationarity pattern is generally unknown, so it is not clear \emph{a priori} what the best window should be, or even what candidate windows to choose from. Our experiments show that our algorithm \emph{adaptively} selects a window that is comparable to the best one \emph{in hindsight}.

\section{Discussions}\label{sec-discussions}

We developed adaptive rolling window approaches to model assessment and selection in a changing environment. Theoretical analyses and numerical experiments demonstrate their adaptivity to unknown temporal distribution shift. Several future directions are worth exploring. First, our rolling window methodology does not assume any structures on the non-stationarity. In practice, temporal distribution shift often exhibits regularities such as seasonalities and trends. It is important to develop methods that automatically detect patterns of distribution shift and make use of data from the more distant past. Second, our model selection algorithm only applies to finitely many models, and it would be interesting to extend it to infinite model classes. Finally, another direction is to use our model selection algorithm to perform hyperparameter tuning for online learning algorithms, so as to further boost their performance.

\section*{Acknowledgement}
Elise Han, Chengpiao Huang, and Kaizheng Wang's research is supported by an NSF grant DMS-2210907 and a startup grant at Columbia University.

\newpage 
\appendix

\section{Proofs for \Cref{sec-assess}}

\subsection{Proof of \Cref{lem-Hoeffding-Bernstein}}\label{sec-lem-Hoeffding-Bernstein-proof}

Inequality (2.10) in \cite{BLM13} implies that for any $t \geq 0$,
\begin{align*}
& \PP \bigg(
\frac{1}{n} \sum_{i=1}^{n} ( x_i - \EE x_i ) > t
\bigg) \le \exp \left(
- \frac{ n t^2 / 2 }{ \sd^2 + (b - a) t / 3 }
\right) .
\end{align*}
Fix $\delta \in (0, 1)$. Then,
\begin{align*}
&\quad \exp  \left(
- \frac{ n t^2 / 2 }{ \sd^2 + (b-a) t / 3 }
\right) \leq \delta \\[4pt]
 \Leftrightarrow &\quad \frac{n  t^2 }{2} \geq \sd^2 \log (1 / \delta) + \frac{t  (b-a) \log ( 1 / \delta ) }{ 3 } \\[4pt]
 \Leftrightarrow &\quad \frac{n}{2} \bigg(
t - \frac{  (b-a) \log (1 / \delta) }{ 3n }
\bigg)^2 \geq \sd^2 \log (1 / \delta)  + \frac{n}{2} \bigg( \frac{  (b-a) \log (1 / \delta) }{3 n } \bigg)^2 \\[4pt]
 \Leftarrow &\quad \bigg(
t - \frac{  (b-a) \log (1 / \delta) }{ 3n }
\bigg)^2 \geq  \bigg( \sd \sqrt{ \frac{ 2 \log (1 / \delta)  }{ n } } + \frac{  (b-a) \log (1 / \delta) }{3 n } \bigg)^2 \\[4pt]
 \Leftarrow &\quad t \geq \sd \sqrt{ \frac{ 2 \log (1 / \delta)  }{ n } } + \frac{ 2  (b-a) \log (1 / \delta) }{3 n } .
\end{align*}
Hence,
\[
\PP \left(
\frac{1}{n} \sum_{i=1}^{n} ( x_i - \EE x_i ) >
\sd \sqrt{ \frac{ 2 \log (1 / \delta)  }{ n } } + \frac{ 2  (b-a) \log (1 / \delta) }{3 n } 
\right) \leq  \delta.
\]
Replacing each $x_i$ by $-x_i$ gives bounds on the lower tail and the absolute deviation.

\subsection{Proof of \Cref{lem-Bernstein-variance}}\label{sec-lem-Bernstein-variance-proof}

The result is trivial when $B_{t, k} = 1$ (i.e.~$k = B_t = 1$), as $\widehat\psi (t , k , \delta) = \psi (t , k , \delta)$. From now on, we assume that $B_{t, k} \ge 2$. We first present a useful lemma.

\begin{lemma}\label{lem-concentration-variance}
	Let $\{ x_i \}_{i=1}^n $ be independent, $[0, 1]$-valued random variables. Define the sample mean $\meanhat = \frac{1}{n} \sum_{i=1}^{n} x_i$ and the sample variance $\totalsdhat^2 = \frac{1}{n-1} \sum_{i=1}^{n} (x_i - \meanhat)^2$. Let $\mean = \frac{1}{n} \sum_{i=1}^{n} \EE x_i$ and $\totalsd^2 = \EE \totalsdhat^2$. We have
	\[
	\totalsd^2 = \frac{1}{n-1} \sum_{i=1}^n  ( \EE x_i - \mean)^2 + \frac{1}{n} \sum_{i=1}^n \var(x_i) 
	\]
	and for any $\delta \in (0, 1) $,
	\begin{align*}
	& \PP \bigg(
	\totalsd \leq \totalsdhat + \sqrt{ \frac{ 2 \log(1 / \delta) }{n - 1} }
	\bigg) \geq 1 - \delta 
	\qquad\text{and}\qquad
	\PP \bigg(
	\totalsd \geq \totalsdhat - \sqrt{ \frac{ 2 \log(1 / \delta) }{n - 1} }
	\bigg) \geq 1 - \delta .
	\end{align*}
\end{lemma}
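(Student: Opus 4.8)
The plan is to treat the statement in two essentially independent pieces: the exact algebraic identity for $\totalsd^2 = \EE\totalsdhat^2$, and the two one-sided concentration bounds. For the identity I would expand the sample variance as $(n-1)\totalsdhat^2 = \sum_{i=1}^n x_i^2 - n\meanhat^2$, take expectations, and substitute $\EE x_i^2 = \var(x_i) + (\EE x_i)^2$ together with $\EE\meanhat^2 = \var(\meanhat) + \mean^2 = \tfrac1{n^2}\sum_{i=1}^n\var(x_i) + \mean^2$ (using independence and $\EE\meanhat = \mean$). Collecting terms gives $\totalsd^2 = \tfrac1n\sum_{i=1}^n\var(x_i) + \tfrac1{n-1}\big(\sum_{i=1}^n(\EE x_i)^2 - n\mean^2\big)$, and the bracket equals $\sum_{i=1}^n(\EE x_i - \mean)^2$ because $\mean$ is the average of the $\EE x_i$. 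This part is routine.

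For the concentration bounds, the starting point is $(n-1)\totalsdhat^2 = \sum_{i=1}^n(x_i-\meanhat)^2 = \min_{c\in\R}\sum_{i=1}^n(x_i-c)^2$, so that taking $c=\mean$ produces the exact decomposition $(n-1)\totalsdhat^2 = \sum_{i=1}^n(x_i-\mean)^2 - n(\meanhat-\mean)^2$, hence $\sum_{i=1}^n(x_i-\mean)^2 - n(\meanhat-\mean)^2 \le (n-1)\totalsdhat^2 \le \sum_{i=1}^n(x_i-\mean)^2$. Both $\tfrac1n\sum_{i=1}^n(x_i-\mean)^2$, an average of the independent $[0,1]$-valued variables $w_i := (x_i-\mean)^2$, and $\meanhat$ itself are governed by \Cref{lem-Hoeffding-Bernstein}. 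The step that keeps the estimate sharp is that $\var(w_i) \le \EE(x_i-\mean)^4 \le \EE(x_i-\mean)^2$, so the variance proxy entering Bernstein's bound for $\tfrac1n\sum_i w_i$ is at most $\tfrac1n\sum_i\EE(x_i-\mean)^2 \le \totalsd^2$ by the identity just proved; thus the leading deviation is $\lesssim \totalsd\sqrt{\log(1/\delta)/(n-1)}$, not merely $\lesssim\sqrt{\log(1/\delta)/(n-1)}$. Applying \Cref{lem-Hoeffding-Bernstein} in each direction (and, for the lower bound on $\totalsdhat^2$, also bounding the nonnegative correction $n(\meanhat-\mean)^2$ by the same inequality) yields, with probability at least $1-\delta$, estimates of the form $\totalsdhat^2 \ge \totalsd^2 - \totalsd\,r$ and $\totalsdhat^2 \le \totalsd^2 + \totalsd\,r + O(\log(1/\delta)/(n-1))$ with $r \asymp \sqrt{\log(1/\delta)/(n-1)}$; an equivalent route is to apply a self-bounding concentration inequality directly to $(n-1)\totalsdhat^2$, whose Efron--Stein-type variation is proportional to itself.

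It then remains to pass from $\totalsdhat^2$ back to $\totalsdhat$, which is elementary. For the first inequality, on the event $\totalsdhat^2 \ge \totalsd^2 - \totalsd\,r$: either $\totalsd\le r$, in which case $\totalsd \le \totalsdhat + r$ trivially since $\totalsdhat\ge0$, or $\totalsd>r$, in which case $\totalsd^2 - \totalsd\,r \ge (\totalsd-r)^2$ and hence $\totalsdhat \ge \totalsd-r$; either way $\totalsd \le \totalsdhat + r$. For the second inequality the cleanest argument is Jensen's inequality, $\EE\totalsdhat \le \sqrt{\EE\totalsdhat^2} = \totalsd$, followed by the upper-tail concentration of $\totalsdhat$ about its mean (equivalently, square and argue as above with the roles reversed). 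Tracking the constants through the one-sided, variance-dependent form of Bernstein's bound is what produces exactly $r = \sqrt{2\log(1/\delta)/(n-1)}$.

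The main obstacle is getting the correct $n^{-1/2}$ scale for the fluctuations of $\totalsdhat$: a direct bounded-differences (McDiarmid) estimate for $\totalsdhat$ gives only order-one deviations, which is far too weak, so one genuinely has to exploit the convex / self-bounding structure of $\sum_i(x_i-\meanhat)^2$ and, equally importantly, retain the variance dependence in Bernstein's inequality so that the elementary step $\totalsd^2 - \totalsd\,r \ge (\totalsd-r)^2$ actually closes. A secondary subtlety is that Jensen controls $\EE\totalsdhat$ only from above by $\totalsd$, so the lower-deviation statement $\totalsd \le \totalsdhat + \cdots$ cannot be read off from concentration about the mean and really needs the detour through $\totalsdhat^2$.
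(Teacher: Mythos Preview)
Your computation of the identity is correct and essentially the same as the paper's; the paper writes $\sum_i(x_i-\meanhat)^2 = \bx^\top(\bI-\tfrac1n\bm{1}\bm{1}^\top)\bx$ and takes the expectation using $\EE[\bx\bx^\top]=\bmu\bmu^\top+\bSigma$, which is just a matrix repackaging of your scalar expansion.

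For the concentration inequalities, the paper does not argue at all: it simply cites Theorem~10 of \cite{MPo09}. Your Bernstein-based route through $w_i=(x_i-\mean)^2$ with the key observation $\var(w_i)\le\EE(x_i-\mean)^2$ is sound and does deliver the right $1/\sqrt{n}$ order, but the claim that it reproduces the \emph{exact} constant $\sqrt{2\log(1/\delta)/(n-1)}$ is not supported: you inherit a $\log(2/\delta)$ from the two-sided form of \Cref{lem-Hoeffding-Bernstein}, an additive $O(\log(1/\delta)/n)$ term, and stray $n/(n-1)$ factors when passing between $\tfrac1n\sum_i\EE(x_i-\mean)^2$ and $\totalsd^2$ and when controlling the correction $n(\meanhat-\mean)^2$. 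What actually yields the clean constant is the self-bounding/entropy argument you mention only parenthetically as ``an equivalent route''---that is precisely the technique behind the Maurer--Pontil result the paper cites. So your plan is correct in spirit, but if you want a self-contained proof with the stated constant, flesh out the self-bounding argument rather than the Bernstein one; if only the order matters for downstream use, your Bernstein sketch is fine but you should drop the claim about the exact constant.
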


\begin{proof}[Proof of \Cref{lem-concentration-variance}]
	Define $\bx = (x_1,\cdots,x_n)^{\top}$, $\bmu = \EE \bx$ and $\bSigma  =  \diag ( \var(x_1) , \cdots, \var(x_n) )$. Let $\bm{1}_n$ be the $n$-dimensional all-one vector. We have $\meanhat = \bm{1}_n^{\top} \bx / n$ and
	\[
	\sum_{i=1}^n ( x_i - \meanhat )^2 = \| ( \bI - \bm{1}_n \bm{1}_n^{\top} / n ) \bx \|_2^2 = \bx^{\top} ( \bI - \bm{1}_n \bm{1}_n^{\top} / n ) \bx .
	\]
	Then,
	\begin{align*}
	(n-1) \EE \totalsdhat^2 
	& = \langle \bmu \bmu^{\top} + \bSigma , \bI - \bm{1}_n \bm{1}_n^{\top} / n \rangle = \bmu^{\top} ( \bI - \bm{1}_n \bm{1}_n^{\top} / n ) \bmu + \langle  \bSigma , \bI - \bm{1}_n \bm{1}_n^{\top} / n \rangle \\
	& = \sum_{i=1}^n ( \EE x_i - \mean )^2 + \bigg( 1 - \frac{1}{n} \bigg) \sum_{i=1}^n \var (x_i).
	\end{align*}
	This verifies the expression of $\totalsd^2$. The concentration bounds come from Theorem 10 in \cite{MPo09}.
\end{proof}

We now come back to \Cref{lem-Bernstein-variance}. It suffices to consider the special case $b-a = 1$. From \Cref{lem-concentration-variance} we immediately get $	\EE \totalsdhat_{t, k}^2 = \totalsd_{t, k}^2$, where
\begin{align}
\totalsd_{t, k}^2
= \sd^2_{t, k}  + \frac{1}{ B_{t, k} - 1}  \sum_{j=t-k+1}^t B_{j} ( \mean_{j} - \mean_{t, k} )^2 .
\label{eqn-v-k}
\end{align}
In addition, for any $\delta \in (0, 1)$,
\begin{align*}
& \PP \left(
\totalsd_{t, k} \leq \totalsdhat_{t, k} + \sqrt{ \frac{ 2 \log(1 / \delta) }{ B_{t, k} - 1} }
\right) \geq 1 - \delta 
\qquad\text{and}\qquad
\PP \left(
\totalsd_{t, k} \geq \totalsdhat_{t, k} - \sqrt{ \frac{ 2 \log(1 / \delta) }{ B_{t, k} - 1} }
\right) \geq 1 - \delta .
\end{align*}

With probability at least $1 - \delta$, we have $\sd_{t, k} \leq \totalsd_{t, k} \leq  \totalsdhat_{t, k} + \sqrt{ \frac{ 2 \log(1 / \delta) }{B_{t, k} - 1} } $ and thus
\begin{align*}
\psi (t , k , \delta ) 
& \leq \left(
\totalsdhat_{t, k} + \sqrt{ \frac{ 2 \log(1 / \delta) }{B_{t, k} - 1} }
\right) \sqrt{ \frac{2 \log( 2 / \delta) }{ B_{t, k} } } + \frac{ 2 \log ( 2 / \delta) }{ 3 ( B_{t, k} - 1 ) } \\[4pt]
& \leq \totalsdhat_{t, k}  \sqrt{ \frac{2 \log( 2 / \delta) }{ B_{t, k} } } + \frac{ 8 \log ( 2 / \delta) }{ 3 ( B_{t, k} - 1 ) } 
= 
\widehat\psi (t , k , \delta ) .
\end{align*}

To prove the second bound, note that
\begin{align*}
\totalsd_{t, k} 
& =\bigg(
\sd^2_{t, k}  + \frac{1}{ B_{t, k} - 1}  \sum_{j=t-k+1}^{t} B_j ( \mean_j - \mean_{t, k} )^2
\bigg)^{1/2}
\\
& \leq \sd_{t, k} + \bigg(
\frac{1}{B_{t, k} - 1}  \sum_{j=t-k+1}^{t} B_j ( \mean_j - \mean_{t, k} )^2
\bigg)^{1/2} \\
& = \sd_{t, k}  + \sqrt{ \frac{B_{t, k}}{B_{t, k} - 1} } \cdot \bigg(
\frac{1}{B_{t, k}} \sum_{j=t-k+1}^{t} B_j ( \mean_j - \mean_{t} )^2
\bigg)^{1/2} \\
& \leq \sd_{t, k} + \sqrt{ \frac{B_{t, k}}{B_{t, k} - 1} } \cdot 
\max_{t-k+1\le j\le t} | \mean_j - \mean_{t} | .
\end{align*}
By \Cref{lem-concentration-variance}, with probability at least $1 - \delta$, we have $ \totalsd_{t, k} \geq  \totalsdhat_{t, k} - \sqrt{ \frac{ 2 \log(1 / \delta) }{B_{t, k} - 1} } $ and thus
\begin{align*}
\totalsdhat_{t, k} 
& \leq \totalsd_{t, k} + \sqrt{ \frac{ 2 \log(1 / \delta) }{B_{t, k} - 1} }  \leq  \sd_{t, k} + \sqrt{ \frac{B_{t, k}}{B_{t, k} - 1} } \cdot 
\max_{t-k+1\le j\le t} | \mean_j- \mean_{t} | + \sqrt{ \frac{ 2 \log(1 / \delta) }{B_{t, k} - 1} }  .
\end{align*}
As a result,
\begin{align*}
\psi ( t, k, \delta ) 
& \ge \sd_{t, k} \sqrt{ \frac{2 \log( 2 / \delta) }{ B_{t,k} } } + \frac{  \log ( 2 / \delta) }{ 3 ( B_{t, k} - 1 ) } \\[4pt]
& \geq
\bigg(
\totalsdhat_{t, k}  -  \sqrt{ \frac{B_{t, k}}{B_{t, k} - 1} } \cdot 
\max_{t-k+1\le j\le t} | \mean_j - \mean_{t} | - \sqrt{ \frac{ 2 \log(1 / \delta) }{B_{t, k} - 1} }  
\bigg)
\sqrt{ \frac{2 \log( 2 / \delta) }{ B_{t, k} } } + \frac{ \log ( 2 / \delta) }{ 3 ( B_{t, k} - 1 ) }  \\[4pt]
& \geq 
\totalsdhat_{t, k} \sqrt{ \frac{2 \log( 2 / \delta) }{ B_{t, k} } } - 
\sqrt{ \frac{ 2 \log ( 2 / \delta ) }{B_{t, k} - 1} } \cdot 
\max_{t-k+1\le j\le t} | \mean_j - \mean_{t} | -  \frac{ 5 \log(2 / \delta) }{ 3 ( B_{t, k} - 1) } 
\\[4pt]
& = \bigg(
\totalsdhat_{t, k}  \sqrt{ \frac{2 \log( 2 / \delta) }{ B_{t, k} } } + \frac{ 8 \log ( 2 / \delta) }{ 3 ( B_{t, k} - 1 ) } \bigg) - 
\sqrt{ \frac{ 2 \log ( 2 / \delta ) }{B_{t, k} - 1} } \cdot 
\max_{t-k+1\le j\le t} | \mean_j - \mean_{t} | -  \frac{ 13 \log(2 / \delta) }{   3(B_{t, k} - 1)  } \\[4pt]
& = \widehat\psi ( t , k , \delta) - 
\sqrt{ \frac{ 2 \log ( 2 / \delta ) }{B_{t, k} - 1} } \cdot 
\max_{t-k+1\le j\le t} | \mean_j - \mean_{t} | -  \frac{ 13 \log(2 / \delta) }{  3(B_{t, k} - 1)  } \\[4pt]
& \geq \widehat\psi ( t , k , \delta) - \xi (t , k , \delta).
\end{align*}
The first and last inequalities follow from the fact that $B_{t, k} \geq 2$.

\subsection{Proof of \Cref{lem-bias-proxy-vs-bias}}\label{sec-lem-bias-proxy-vs-bias-proof}

When the event \eqref{eqn-assess-bias-variance-empirical} happens, for every $i\in[k]$,
\[
|\widehat{\mean}_{t,k} - \widehat{\mean}_{t,i}|
\le 
|\widehat{\mean}_{t,k} - \mean_t| + |\widehat{\mean}_{t,i}-\mean_t| 
\le 
\left[\phi(t,k) + \widehat{\psi}(t,k,\delta)\right] + \left[\phi(t,i) + \widehat{\psi}(t,i,\delta)\right],
\]
so
\[
|\widehat{\mean}_{t,k} - \widehat{\mean}_{t,i}| - \left[\widehat{\psi}(t,k,\delta)+\widehat{\psi}(t,i,\delta)\right]
\le 
\phi(t,k) + \phi(t,i)
\le 
2\phi(t,k),
\]
where we used $\phi(t,i)\le \phi(t,k)$. Taking maximum over all $i\in[k]$ gives $\widehat{\phi}(t,k,\delta)\le 2\phi(t,k)$.

\subsection{Proof of \Cref{thm-assess-oracle}}\label{sec-thm-assess-oracle-proof}

We will prove that with probability at least $1-\delta$,
\begin{equation}
| \meanhat_{t, \widehat{k} } - \mean_{t} |  
\leq  3\min_{k \in [t]} \left\{
3 \sqrt{ \log ( 6t / \delta )  } \cdot 
\max_{t-k+1\le j\le t} | \mean_{j} - \mean_{t} |
+
\sd_{t, k} \sqrt{ \frac{2 \log( 6t / \delta) }{ B_{t,k} } } + \frac{ 10 (b-a) \log ( 6t / \delta) }{ B_{t, k} }
\right\}.
\label{eqn-assess-oracle-precise}
\end{equation}

By \Cref{lem-GL},
\begin{equation}
\PP\left(
|\meanhat_{t , \widehat{k} } - \mean_t| \leq 3 \min_{k \in [t]} \left\{
	\phi (t , k ) + \widehat\psi(t, k, \delta') 
	\right\}
\right)
\ge 
1-2t\delta'.
\label{eqn-assess-oracle-empirical}
\end{equation}
By the second bound in \Cref{lem-Bernstein-variance},
\begin{equation}
\PP \bigg(
\widehat\psi (t , k , \delta' ) 
\leq  \psi (t , k , \delta' ) + \xi (t , k , \delta' ) ,~\forall k \in [t]
\bigg) \geq 1 - t\delta' .
\label{eqn-assess-empirical-psi-upper-bound}
\end{equation}
By \eqref{eqn-assess-oracle-empirical}, \eqref{eqn-assess-empirical-psi-upper-bound} and the union bound, we obtain that
\[
\PP\left( |\meanhat_{t , \widehat{k} } - \mean_t| 
\leq 
3 \min_{k \in [t]} 
\big\{
\phi (t , k ) + \psi (t , k , \delta' ) + \xi (t , k , \delta' )
\big\} \right)
\ge 
1-3t\delta'
=
1-\delta.
\]
When $B_{t,k}\ge 2$,
\begin{align*}
& 
\phi (t , k ) + \psi (t , k , \delta' ) + \xi (t , k , \delta' )
 \\[4pt]
& \le 
\max_{t-k+1\le j\le t} | \mean_j - \mean_{t} |
+
\sd_{t, k} \sqrt{ \frac{2 \log( 2 / \delta') }{ B_{t,k} } } + \frac{ 2 (b-a) \log ( 2 / \delta') }{ 3 ( B_{t, k} - 1 ) } \\
&\qquad +
\sqrt{  \frac{4 \log ( 2 / \delta' ) }{B_{t, k}} } \cdot 
\max_{t-k+1\le j\le t} | \mean_{j} - \mean_{t} | +  \frac{ 13 (b-a) \log( 2 / \delta') }{ 3( B_{t, k} - 1 ) }  \\[4pt]
& \le
3 \sqrt{ \log ( 2 / \delta' )  } \cdot 
\max_{t-k+1\le j\le t} | \mean_{j} - \mean_{t} |
+
\sd_{t, k} \sqrt{ \frac{2 \log( 2 / \delta') }{ B_{t,k} } } + \frac{ 5 (b-a) \log ( 2 / \delta') }{ B_{t, k} - 1 } \\[4pt]
&\le 
3 \sqrt{ \log ( 2 / \delta' )  } \cdot 
\max_{t-k+1\le j\le t} | \mean_{j} - \mean_{t} |
+
\sd_{t, k} \sqrt{ \frac{2 \log( 2 / \delta') }{ B_{t,k} } } + \frac{ 10 (b-a) \log ( 2 / \delta') }{ B_{t, k} } \\[4pt]
&=
3 \sqrt{ \log ( 6t / \delta )  } \cdot 
\max_{t-k+1\le j\le t} | \mean_{j} - \mean_{t} |
+
\sd_{t, k} \sqrt{ \frac{2 \log( 6t / \delta) }{ B_{t,k} } } + \frac{ 10 (b-a) \log ( 6t / \delta) }{ B_{t, k} }.
\end{align*}
When $B_{t,k}=1$, we have $\phi (t , k ) + \psi (t , k , \delta' )
=
b-a$, which is dominated by the bound above.

\section{Proofs for \Cref{sec-select}}

\subsection{Proof of \Cref{lem-from-assess-to-compare}}\label{sec-lem-from-assess-to-compare-proof}
Fix $k\in[t]$. For every $f\in\{f_1,f_2\}$,
\begin{align*}
L_t(f_{\widehat{r}}) - L_t(f)
&=
\big[L_t(f_{\widehat{r}_k}) - \widehat{L}_{t,k}(f_{\widehat{r}_k})\big] + \big[\widehat{L}_{t,k}(f_{\widehat{r}_k}) - L_t(f)\big] \\[4pt]
&\le 
\big[L_t(f_{\widehat{r}_k}) - \widehat{L}_{t,k}(f_{\widehat{r}_k})\big] + \big[\widehat{L}_{t,k}(f) - L_t(f)\big] \\[4pt]
&=
\big[\widehat{L}_{t,k}(f) - \widehat{L}_{t,k}(f_{\widehat{r}_k})\big] - \big[L_t(f) - L_t(f_{\widehat{r}_k})\big] \\[4pt]
&\le 
\left|\big[\widehat{L}_{t,k}(f_1) - \widehat{L}_{t,k}(f_2)\big] - \big[L_t(f_1) - L_t(f_2)\big]\right|.
\end{align*}
Taking minimum over $f\in\{f_1,f_2\}$ yields
\[
L_t(f_{\widehat{r}}) - \min_{r\in[2]}L_t(f_r)
\le 
\left|\big[\widehat{L}_{t,k}(f_1) - \widehat{L}_{t,k}(f_2)\big] - \big[L_t(f_1) - L_t(f_2)\big]\right|.
\]

\subsection{Proof of \Cref{thm-compare-oracle}}\label{sec-thm-compare-oracle-proof}

More precisely, we will prove that with probability at least $1-\delta$, \Cref{alg-compare} outputs $\widehat{f}$ satisfying
\[
\|\widehat{f} - f_t^*\|_{\distP} - \min_{r\in[2]} \|f_r - f_t^*\|_{\distP}
\le C'\sqrt{\log(6t/\delta)}\min_{k\in[t]}\left\{\max_{t-k+1\le j\le t}\|f_j^*-f_t^*\|_{\distP} + \frac{M}{\sqrt{B_{t,k}}}\right\},
\]
where $C'>0$ is a universal constant.

Following the notation in Problem \ref{problem-mean} with $u_{j,i} = \loss(f_1,z_{j,i}) - \loss(f_2,z_{j,i})$, we define
\[
\sd_j^2 = \var(u_{j,1}) = \var\left(\loss(f_1,z_{j,1}) - \loss(f_2,z_{j,1})\right)
\quad\text{and}\quad
\sd_{t,k}^2 = \frac{1}{B_{t,k}}\sum_{j=t-k+1}^tB_j\sd_j^2.
\]
By \Cref{lem-from-assess-to-compare} and \Cref{thm-assess-oracle}, with probability at least $1-\delta$,
\begin{align}
& L_t(\widehat{f}) - \min_{r\in[2]}L_t(f_r) \notag\\
& \le
| \widehat{\Delta}_{t, \widehat{k} } - \Delta_{t} |  \notag \\
& \le
3\min_{k \in [t]} \left\{
3 \sqrt{ \log ( 6t / \delta )  } \cdot 
\max_{t-k+1\le j\le t} | \Delta_{j} - \Delta_{t} |
+
\sd_{t, k} \sqrt{ \frac{2 \log( 6t / \delta) }{ B_{t,k} } } + \frac{ 40M_0^2 \log ( 6t / \delta) }{ B_{t, k} }
\right\}.
\label{eqn-compare-oracle-raw}
\end{align}
From now on suppose that this event happens.

We first derive a bound for $| \Delta_j - \Delta_{t} | $. Since
\[
\Delta_j  = \| f_1 - f_j^* \|_{\distP}^2 - \| f_2 - f_j^* \|_{\distP}^2 
= \langle f_1 - f_2 , f_1 + f_2 - 2 f_j^* \rangle_{\distP},
\]
then for every $j\in\ZZ_+$,
\[
| \Delta_j - \Delta_t | = 2 |
\langle f_1 - f_2 , f_j^* - f_t^* \rangle_{\distP} |
\le 
2\| f_1 - f_2 \|_{\distP} \|  f_j^* - f_t^* \|_{\distP}.
\]
Hence
\begin{equation}\label{eqn-compare-bias-bound}
\max_{t-k+1\le j\le t} | \Delta_j - \Delta_t | 
\le
2\|f_1-f_2\|_{\distP}\max_{t-k+1\le j\le t}\|  f_j^* - f_t^* \|_{\distP}.
\end{equation}

Next, we bound $\sigma_{t,k}$. We have
\begin{align*}
\sd_j^2 
&=
\var \big(
 [ f_1(x_{j, 1}) - y_{j, 1} ]^2 - [ f_2(x_{j, 1}) - y_{j, 1} ]^2
\big) \\[4pt]
&\le 
\EE\left[\big(
 [ f_1(x_{j, 1}) - y_{j, 1} ]^2 - [ f_2(x_{j, 1}) - y_{j, 1} ]^2
\big)^2\right] \\[4pt]
&=
\EE\left[
\big(f_1(x_{j,1})-f_2(x_{j,1})\big)^2
\big(f_1(x_{j,1})+f_2(x_{j,1})-2y_{j,1}\big)^2
\right] \\[4pt]
&\lesssim 
M_0^2\|f_1-f_2\|_{\distP}^2,
\end{align*}
which implies
\begin{equation}
\sd_{t, k} \lesssim
M_0 \| f_1 - f_2 \|_{\distP}
\le 
M_0
\big( 
\| f_1 - f_t^* \|_{\distP} + \| f_2 - f_t^* \|_{\distP} 
\big).
\label{eqn-compare-variance-bound}
\end{equation}

Substituting \eqref{eqn-compare-bias-bound} and \eqref{eqn-compare-variance-bound} into \eqref{eqn-compare-oracle-raw}, there exist a universal constant $C>0$ such that for every $k\in[t]$, 
\begin{align}
L_t(\widehat{f}) - \min_{r\in[2]}L_t(f_r)
&\le 
C\Bigg[
\sqrt{\log(6t / \delta)} \max_{t-k+1\le j\le t}\|f_j^*-f_t^*\|_{\distP} \|f_1-f_2\|_{\distP} \notag\\
&\qquad 
+ M_0\sqrt{\frac{\log(6t / \delta)}{B_{t,k}}} \left( \|f_1-f_t^*\|_{\distP} + \|f_2-f_t^*\|_{\distP} \right)
+ \frac{M_0^2\log(6t / \delta)}{B_{t,k}}
\Bigg] \notag\\[4pt]
& =
C \bigg[
2\Phi(t,k) \left( \|f_1-f_t^*\|_{\distP} + \|f_2-f_t^*\|_{\distP} \right) + \Psi(t,k)
\bigg],
\label{eqn-compare-oracle-denoised}
\end{align}
where
\[
\Phi(t,k) = \frac{1}{2}\sqrt{\log(6t / \delta)}\left(\max_{t-k+1\le j\le t}\|f_j^*-f_t^*\|_{\distP} + \frac{M_0}{\sqrt{B_{t,k}}}\right)
\quad\text{and}\quad
\Psi(t,k) = \frac{M_0^2\log(6t / \delta)}{B_{t,k}},
\]
and we used the triangle inequality $\|f_1-f_2\|_{\distP} \le \|f_1-f_t^*\|_{\distP} + \|f_2-f_t^*\|_{\distP}$. 

Without loss of generality, assume $L_t(f_1)\ge L_t(f_2)$. When $\widehat{r}_{\widehat{k}}=1$, we have $\widehat{f}=f_1$. Then \eqref{eqn-compare-oracle-denoised} yields
\begin{align*}
\|f_1-f_t^*\|_{\distP}^2 - \|f_2-f_t^*\|_{\distP}^2
&=
L_t(\widehat{f}) - \min_{r\in[2]}L_t(f_r) \\[4pt]
&\le 
C \bigg[
2\Phi(t,k) \left( \|f_1-f_t^*\|_{\distP} + \|f_2-f_t^*\|_{\distP} \right) + \Psi(t,k)
\bigg].
\end{align*}
Completing the squares gives
\[
\bigg[\|f_1-f_t^*\|_{\distP}-C\Phi(t,k)\bigg]^2
\le 
\bigg[\|f_2-f_t^*\|_{\distP}+C\Phi(t,k)\bigg]^2
+
C\Psi(t,k).
\]
Hence
\begin{align}
\|f_1-f_t^*\|_{\distP}
&\le 
\|f_2-f_t^*\|_{\distP}+2C\Phi(t,k) + \sqrt{C\Psi(t,k)} \notag\\[4pt]
&\le 
\|f_2-f_t^*\|_{\distP} + C'\sqrt{\log(6t/\delta)}\left(\max_{t-k+1\le j\le t}\|f_j^*-f_t^*\|_{\distP} + \frac{M_0}{\sqrt{B_{t,k}}}\right)
\end{align}
for some universal constant $C'>0$. Since this holds for all $k\in[t]$, we get
\begin{align}
&
\|\widehat{f}-f_t^*\|_{\distP} - \min_{r\in[2]}\|f_r-f_t^*\|_{\distP} \notag\\[4pt]
&=
\|f_1-f_t^*\|_{\distP} - \|f_2-f_t^*\|_{\distP} \notag\\[4pt]
&\le 
C'\sqrt{\log(6t / \delta)}\min_{k\in[t]}\left\{ \max_{t-k+1\le j\le t}\|f_j^*-f_t^*\|_{\distP} + \frac{M_0}{\sqrt{B_{t,k}}} \right\}.
\label{eqn-compare-oracle-unsquared}
\end{align}
We have proved this bound for the case $\widehat{r}_{\widehat{k}}=1$. When $\widehat{r}_{\widehat{k}}=2$, we have $\widehat{f}=f_2$ and hence $\|\widehat{f}-f_t^*\|_{\distP} - \min_{r\in[2]}\|f_r-f_t^*\|_{\distP}=0$, so the bound \eqref{eqn-compare-oracle-unsquared} continues to hold.

\subsection{Proof of \Cref{thm-select-oracle}}\label{sec-thm-select-oracle-proof}

More precisely, we will prove that with probability at least $1-\delta$, \Cref{alg-tournament} outputs $\widehat{f}$ satisfying
\[
\|\widehat{f}-f_t^*\|_{\distP} - \min_{r\in[m]}\|f_r-f_t^*\|_{\distP} 
\le
C'\sqrt{(\log m)\log(mt/\delta)}\min_{k\in[t]}\left\{\max_{t-k+1\le j\le t}\|f_j^*-f_t^*\|_{\distP} + \frac{M_0}{\sqrt{B_{t,k}}}\right\}
\]
for some universal constant $C'>0$.

Denote \Cref{alg-compare} with data $\{\datasetB_j\}_{j=1}^t$ by $\alg$, which takes as input two models $f',f''\in\functionclass$ and outputs the selected model $\alg(\{f',f''\})\in\{f',f''\}$. For notational convenience, we set $\alg(\{f\})=f$ for every $f\in\functionclass$. By \Cref{thm-compare-oracle} and the union bound, with probability at least $1-\delta$, the following holds for all $f',f''\in\{f_r\}_{r=1}^m$:
\begin{multline}
\|\alg(\{f',f''\}) - f_t^*\|_{\distP} - \min_{f\in\{f',f''\}} \|f - f_t^*\|_{\distP} \\
\le C\sqrt{\log(mt/\delta)}\min_{k\in[t]}\left\{\max_{t-k+1\le j\le t}\|f_j^*-f_t^*\|_{\distP} + \frac{M_0}{\sqrt{B_{t,k}}}\right\},
\label{eqn-select-good-selection-over-all-pairs}
\end{multline}
where $C>0$ is a universal constant. From now on, suppose that this event happens.

For each $s\in[S+1]$, let 
\[
g_s\in\argmin_{g\in\functionclass_s}L_t(g)=\argmin_{g\in\functionclass_s}\|g-f_t^*\|_{\distP}.
\] 
Since $\functionclass = \functionclass_1 \supseteq \cdots \supseteq \functionclass_S \supseteq \functionclass_{S+1} = \{\widehat{f}\}$, then
\[
\|\widehat{f}-f_t^*\|_{\distP} - \min_{r\in[m]}\|f_r-f_t^*\|_{\distP} 
=
\sum_{s=1}^S\big(\|g_{s+1}-f_t^*\|_{\distP} - \|g_s-f_t^*\|_{\distP}\big).
\]
For each $s\in[S]$, there exists $i_s\in[m_s+1]$ such that $g_s\in\cG_{s,i_s}$. Let $\widehat{g}_s=\alg(\cG_{s,i_s})$ be the model selected from $\cG_{s,i_s}$ in \Cref{alg-tournament}. By \eqref{eqn-select-good-selection-over-all-pairs} and the definition of $g_s$,
\[
\|\widehat{g}_s - f_t^*\|_{\distP} - \|g_s - f_t^*\|_{\distP}
\le 
C\sqrt{\log(mt/\delta)}\min_{k\in[t]}\left\{\max_{t-k+1\le j\le t}\|f_j^*-f_t^*\|_{\distP} + \frac{M_0}{\sqrt{B_{t,k}}}\right\}.
\]

Moreover, since $\widehat{g}_s\in\functionclass_{s+1}$, then $\|g_{s+1}-f_t^*\|_{\distP} \le \|\widehat{g}_s-f_t^*\|_{\distP}$. Therefore,
\begin{align*}
\|\widehat{f}-f_t^*\|_{\distP} - \min_{r\in[m]}\|f_r-f_t^*\|_{\distP} 
&=
\sum_{s=1}^S\big(\|g_{s+1}-f_t^*\|_{\distP} - \|g_s-f_t^*\|_{\distP}\big) \\[4pt]
&\le 
\sum_{s=1}^S\big(\|\widehat{g}_s-f_t^*\|_{\distP} - \|g_s-f_t^*\|_{\distP}\big) \\[4pt]
&\le 
\sum_{s=1}^S C\sqrt{\log(mt/\delta)}\min_{k\in[t]}\left\{\max_{t-k+1\le j\le t}\|f_j^*-f_t^*\|_{\distP} + \frac{M_0}{\sqrt{B_{t,k}}}\right\} \\[4pt]
&\le
C'\sqrt{(\log m)\log(mt/\delta)}\min_{k\in[t]}\left\{\max_{t-k+1\le j\le t}\|f_j^*-f_t^*\|_{\distP} + \frac{M_0}{\sqrt{B_{t,k}}}\right\}
\end{align*}
for some universal constant $C'>0$.

\section{Numerical Experiments: Additional Details}

\subsection{Example 2 of the Synthetic Data}\label{sec-curve}
We give an outline of how the true mean sequence $\{\mu_t\}$ in \Cref{eg-syn-2} is generated. The sequence is constructed by combining 4 parts, each representing a distribution shift pattern. In the first part, the sequence experiences large shifts. Then, it switches to a sinusoidal pattern. Following that, the environment stays stationary for some time. Finally, the sequence drifts randomly at every period, where the drift sizes are independently sampled from $\{1,-1\}$ with equal probability and scaled with a constant. 

The function for generating the sequence takes in 3 parameters $\texttt{N}$, $\texttt{n}$ and $\texttt{seed}$, where $\texttt{N}$ is the total number of periods, $\texttt{n}$ is the parameter determining the splitting points of the 4 parts, and $\texttt{seed}$ is the random seed used for code reproducibility. In our experiment, we set $\texttt{N = 100}$,\ $\texttt{n = 2}$ and $\texttt{seed = 2024}$. The exact function can be found in our code at \url{https://github.com/eliselyhan/ARW}.

\subsection{Non-Stationarity in arXiv and Housing Data}

\begin{figure}[H]
	\centering
	\begin{subfigure}[arXiv data]{
			\label{fig-arxiv-true-means}
			\includegraphics[scale=0.5]{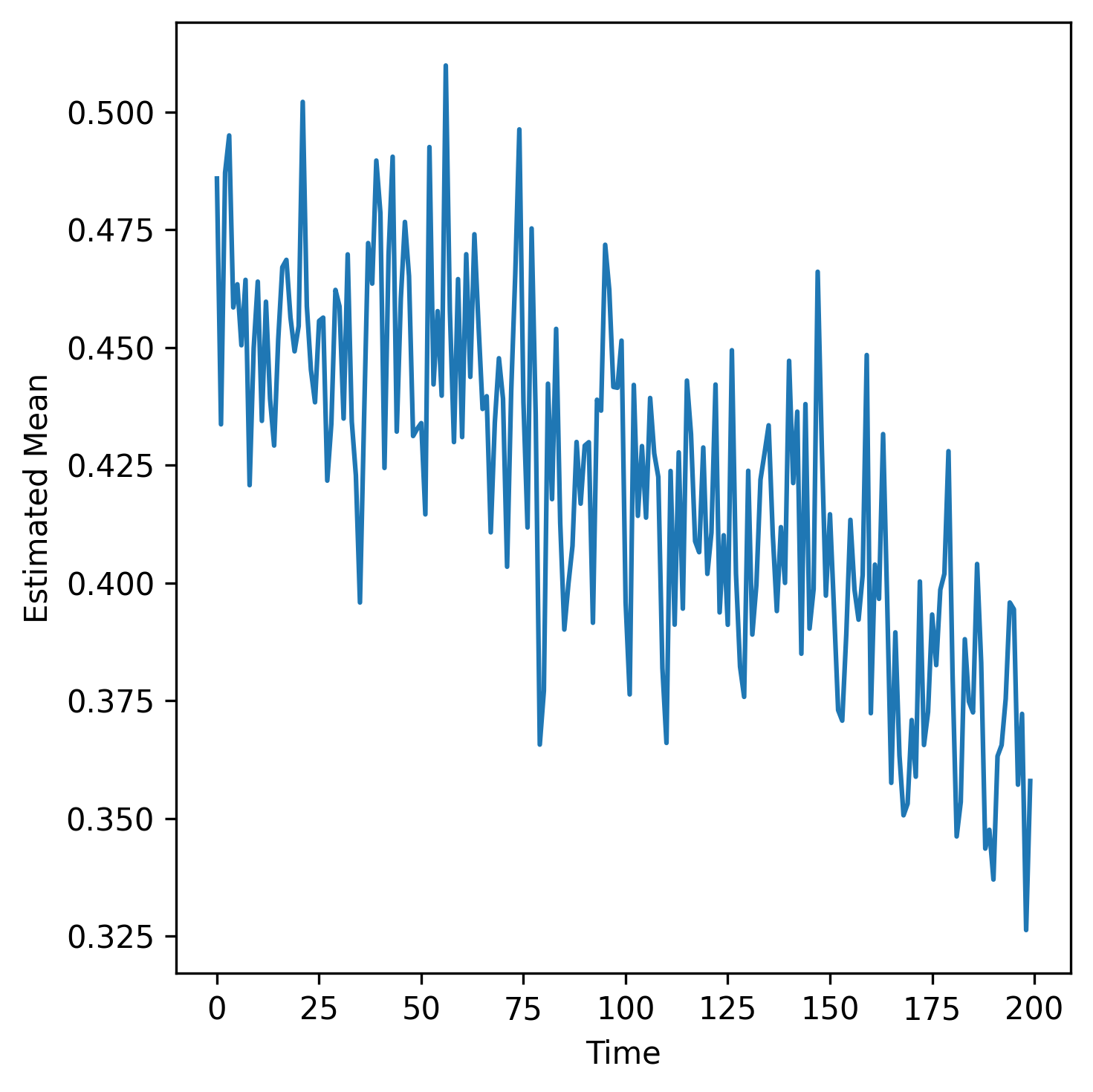}
		}
	\end{subfigure}
	\quad
	\begin{subfigure}[Dubai housing data]{
			\label{fig-housing-true-means}
			\includegraphics[scale=0.5]{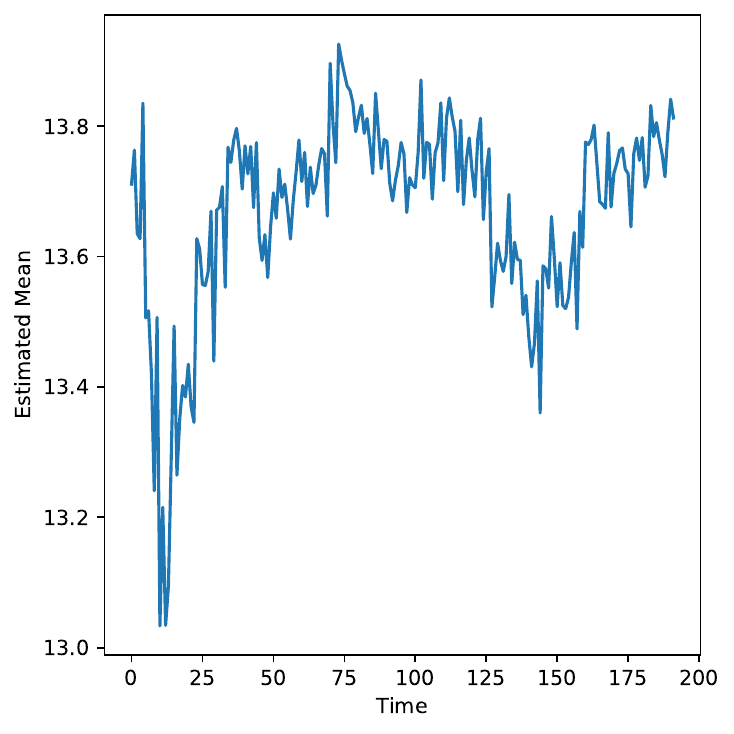}
		}
	\end{subfigure}
	\caption{Two patterns of temporal distribution shift: slow drift (left) and abrupt changes (right).}
\end{figure}

\subsection{Experiments on the Housing Data}\label{sec-housing-details}

We focus on transactions of studios and apartments with 1 to 4 bedrooms, between January 1st, 2008 and December 31st, 2023. We import variables \texttt{instance\_date} (transaction date), \texttt{area\_name\_en} (English name of the area where the apartment is located in), \texttt{rooms\_en} (number of bedrooms), \texttt{has\_parking} (whether or not the apartment has a parking spot), \texttt{procedure\_area} (area in the apartment), \texttt{actual\_worth} (final price) from the data. 

We use \texttt{instance\_date} (transaction date) to construct monthly datasets. The target for prediction is the logarithmic of \texttt{actual\_worth}. The predictors are \texttt{area\_name\_en}, \texttt{rooms\_en}, \texttt{has\_parking} and \texttt{procedure\_area}. \texttt{area\_name\_en} has 58 possible values and encoded as an integer variable. 
We remove a sample when its \texttt{actual\_worth} or \texttt{procedure\_area} is among the largest or smallest 2.5\% of the population, whichever is true. After the trimming, $91.6\%$ of the data remains.

For random forest regression, we use the function \texttt{RandomForestRegressor} in the Python library \texttt{scikit-learn}. For XGBoost regression, we use the function \texttt{XGBRegressor} in the Python library \texttt{xgboost}. In both cases, we set \texttt{random\_state = 0} and do not change any other default parameters. 
Implementation details can be found in our code at \url{https://github.com/eliselyhan/ARW}.

{
\bibliographystyle{ims}
\bibliography{bib}
}

\end{document}